\newtheorem{theorem}{Theorem}
\newcommand{\vnorm}[1]{\left|\left|#1\right|\right|}    %
\newcommand{\tr}[1]{\mbox{tr}#1}
\newcommand{\E}{\mathbf{E}}
\newcommand{\R}{\mathbb{R}} %
\DeclareMathOperator*{\argmax}{\arg\,max}
\DeclareMathOperator*{\argmin}{\arg\,min}
\newcommand{\A}{\mathcal{A}}
\newcommand{\X}{\mathcal{X}}
\newcommand{\Y}{\mathcal{Y}}
\newcommand{\T}{^\top}
\newcommand{\La}{\Lambda}
\newcommand{\lgk}{\log(K)}
\newcommand{\prob}[1]{\Pr(#1)}
\newcommand{\expp}[1]{\exp\left(#1\right)}
\newcommand{\dmin}{\Delta_{\min}}
\newcommand{\dmint}{\Delta_{\min,t}}
\newcommand{\muh}{\hat{\mu}}
\newcommand{\Etil}{\tilde{E}}
\newcommand{\tethat}{\hat{\theta}}
\newcommand{\meanf}{h}
\newcommand{\cmin}{c_{\min}}
\newcommand{\algGSE}{\ensuremath{\tt GSE}\xspace}
\newcommand{\algFW}{\ensuremath{\tt FWG}\xspace}
\newcommand{\algGSELin}{\ensuremath{\tt GSE}-\ensuremath{\tt Lin}\xspace}
\newcommand{\algGSELog}{\ensuremath{\tt GSE}-\ensuremath{\tt Log}\xspace}
\newcommand{\algGSELinFWG}{\ensuremath{\tt GSE}-\ensuremath{\tt Lin}-\ensuremath{\tt FWG}\xspace}
\newcommand{\algGSELogFWG}{\ensuremath{\tt GSE}-\ensuremath{\tt Log}-\ensuremath{\tt FWG}\xspace}
\newcommand{\algGSELinWynn}{\ensuremath{\tt GSE}-\ensuremath{\tt Lin}-\ensuremath{\tt Wynn}\xspace}
\newcommand{\algGSELinG}{\ensuremath{\tt GSE}-\ensuremath{\tt Lin}-\ensuremath{\tt Greedy}\xspace}
\newcommand{\algBG}{\ensuremath{\tt BayesGap}\xspace}
\newcommand{\algBGLin}{\ensuremath{\tt BayesGap}-\ensuremath{\tt Lin}\xspace}
\newcommand{\algBGexp}{\ensuremath{\tt BayesGap}-\ensuremath{\tt exp}\xspace}
\newcommand{\algBGM}{\ensuremath{\tt BayesGap}-\ensuremath{\tt M}\xspace}
\newcommand{\algLinUCB}{\ensuremath{\tt LinUCB}\xspace}
\newcommand{\algUCBGLM}{\ensuremath{\tt UCB}-\ensuremath{\tt GLM}\xspace}
\newcommand{\algLinGapE}{\ensuremath{\tt LinGapE}\xspace}
\newcommand{\algLinGapEG}{\ensuremath{\tt LinGapE}-\ensuremath{\tt Greedy}\xspace}
\newcommand{\algPc}{\ensuremath{\tt Peace}\xspace}
\newcommand{\algOD}{\ensuremath{\tt OD}-\ensuremath{\tt LinBAI}\xspace}
\newtheorem{lemma}{Lemma}[]
\newtheorem{cor}{Corollary}
\newtheorem{remark}{Remark}
\title{\bf Fixed-Budget Best-Arm Identification in Structured Bandits}
\author{%
  Mohammad Javad Azizi\\
  {\small University of Southern California}
  \\{\small \tt azizim@usc.edu}  
  \and
  Branislav Kveton\\ {\small Google Research}
  \\{\small \tt bkveton@google.com}
  \and
  Mohammad Ghavamzadeh\\{\small Google Research}
  \\{\small \tt ghavamza@google.com} 
}
\date{}
\begin{document}

\maketitle
\maketitle

\begin{abstract}
Best-arm identification (BAI) in a fixed-budget setting is a bandit problem where the learning agent maximizes the probability of identifying the optimal (best) arm after a fixed number of observations. Most works on this topic study unstructured problems with a small number of arms, which limits their applicability. We propose a general tractable algorithm that incorporates the structure, by successively eliminating suboptimal arms based on their mean reward estimates from a joint generalization model. We analyze our algorithm in linear and generalized linear models (GLMs), and propose a practical implementation based on a G-optimal design. In linear models, our algorithm has competitive error guarantees to prior works and performs at least as well empirically. In GLMs, this is the first practical algorithm with analysis for fixed-budget BAI.
\end{abstract}

\section{Introduction}
\label{sec:introduction}

\emph{Best-arm identification} (BAI) is a \textit{pure exploration} bandit problem where the goal is to identify the optimal arm. It has many applications, such as online advertising, recommender systems, and vaccine tests \citep{pmlr-v33-hoffman14,lattimore-Bandit}. In \emph{fixed-budget (FB)} BAI \citep{bubeck2010pure,audibert-2010-BAI}, the goal is to accurately identify the optimal arm within a fixed budget of observations (arm pulls). This setting is common in applications where the observations are costly. %
However, it is more complex to analyze than the \emph{fixed-confidence (FC)} setting, due to complications in budget allocation~\cite[Section 33.3]{lattimore-Bandit}. In FC BAI, the goal is to find the optimal arm with a guaranteed level of confidence, while minimizing the sample complexity. 

Structured bandits are bandit problems in which the arms share a common structure, e.g.,~\emph{linear} or \emph{generalized linear} models~\citep{GLMUCB-Filippi-2010,soare2014bestarm}. 
BAI in structured bandits has been mainly studied in the FC setting with the linear model~\citep{soare2014bestarm,xu2017fully-LinGapE,degenne2020gamification}. The literature of FB BAI for linear bandits was limited to \algBG \citep{pmlr-v33-hoffman14} for a long time. This algorithm does not explore sufficiently, and thus, performs poorly \citep{xu2017fully-LinGapE}. \cite{katzsamuels2020empirical-peace} recently proposed \algPc for FB BAI in linear bandits. Although this algorithm has desirable theoretical guarantees, it is computationally intractable, and its approximation loses the desired properties of the exact form. \algOD~\citep{yang2021minimax} is a concurrent work for FB BAI in linear bandits. It is a sequential halving algorithm with a special first stage, in which most arms are eliminated. This makes the algorithm inaccurate when the number of arms is much larger than the number of features, a common setting in structured problems. We discuss these three FB BAI algorithms in detail in \cref{sec:relatedWorks} and empirically evaluate them in \cref{sec:experiments}. 

In this paper, we address the shortcomings of prior work by developing a general successive elimination algorithm that can be applied to several FB BAI settings (\cref{sec:GSE}). The key idea is to divide the budget into multiple stages and allocate it \textit{adaptively} for exploration in each stage. As the allocation is updated in each stage, our algorithm adaptively eliminates suboptimal arms, and thus, properly addresses the important trade-off between \textit{adaptive} and \textit{static} allocation in structured BAI \citep{soare2014bestarm,xu2017fully-LinGapE}. We analyze our algorithm in \emph{linear} bandits in \cref{sec:lin}. In \cref{sec:GLM}, we extend our algorithm and analysis to \emph{generalized linear models} (GLMs) and present the first BAI algorithm for these models. Our error bounds in \cref{sec:lin,sec:GLM} motivate the use of a G-optimal allocation in each stage, for which we derive an efficient algorithm in \cref{sec:Gopt}. Using extensive experiments in \cref{sec:experiments}, we show that our algorithm performs at least as well as a number of baselines, including \algBG, \algPc, and \algOD.

\section{Problem Formulation}
\label{sec:setting}

We consider a general stochastic bandit with $K$ arms. The reward distribution of each arm $i \in \A$ (the set of $K$ arms) has mean $\mu_i$. Without loss of generality, we assume that $\mu_1 > \mu_2 \geq \cdots \geq \mu_K$; thus arm 1 is optimal. Let $x_i \in \R^d$ be the feature vector of arm $i$, such that $\sup_{i \in \A} \vnorm{x_i} \leq L$ holds, where $\vnorm{\cdot}$ is the $\ell_2$-norm in $\R^d$. We denote the observed rewards of arms by $y \in \R$. Formally, the reward of arm $i$ is $y = f(x_i) + \epsilon$, where $\epsilon$ is a $\sigma^2$-sub-Gaussian noise and $f(x_i)$ is any function of $x_i$, such that $\mu_i = f(x_i)$. In this paper, we focus on two instances of $f$: linear (\cref{eq:LinMuh}) and generalized linear (\cref{eq:GLMMuh}).

We denote by $B$ the fixed budget of arm pulls and by $\zeta$ the arm returned by the BAI algorithm. In the FB setting, the goal is to minimize the probability of error, i.e.,~$\delta = \prob{\zeta \neq 1}$~\citep{bubeck2010pure}. This is in contrast to the FC setting, where the goal is to minimize the sample complexity of the algorithm for a given upper bound on $\delta$.

\section{Generalized Successive Elimination}
\label{sec:GSE}

Successive elimination~\citep{AlmostOptimal-Karnin-2013} is a popular BAI algorithm in multi-armed bandits (MABs). Our algorithm, which we refer to as Generalized Successive Elimination (\algGSE), generalizes it to structured reward models $f$. We provide the pseudo-code of \algGSE in \cref{alg:SE}.

\algGSE operates in $s =\lceil\log_{\eta}K\rceil$ stages, where $\eta$ is a tunable elimination parameter, usually set to be 2. The budget $B$ is split evenly over $s$ stages, and thus, each stage has budget $n=\lfloor B/s\rfloor$. In each stage $t \in [s]$, \algGSE pulls arms for $n$ times and eliminates $1 - 1 / \eta$ fraction of them. We denote the set of the remaining arms at the beginning of stage $t$ by $\A_t$. By construction, only a single arm remains after $s$ stages. Thus, $\A_1 = \A$ and $\A_{s+1} = \{\zeta\}$. In stage $t$, \algGSE performs the following steps:

\noindent \textbf{Projection (Line \ref{line:project}):} To avoid singularity issues, we project the remaining arms into their spanned subspace with $d_t \leq d$ dimensions. We discuss this more after \cref{eq:LinMuh}.

\noindent \textbf{Exploration (Line \ref{line:Explore}):} The arms in $\A_t$ are sampled according to an allocation vector $\Pi_t \in \mathbb{N}^{\A_t}$, i.e.,~$\Pi_t(i)$ is the number of times that arm $i$ is pulled in stage $t$. %
In Sections~\ref{sec:lin} and~\ref{sec:GLM}, we first report our results for general $\Pi_t$ and then show how they can be improved if $\Pi_t$ is an {\em adaptive} allocation based on the G-optimal design, described in \cref{sec:Gopt}.

\noindent \textbf{Estimation (Line \ref{line:Estimate}):} Let $X_t = (X_{1, t}, \dots, X_{n, t})$ and $Y_t=(Y_{1, t}, \dots, Y_{n, t})$ be the feature vectors and rewards of the arms sampled in stage $t$, respectively. Given the reward model $f$, $X_t$, and $Y_t$, we estimate the mean reward of each arm $i$ in stage $t$, and denote it by $\muh_{i, t}$. For instance, if $f$ is a linear function, $\muh_{i, t}$ is estimated using linear regression, as in \cref{eq:LinMuh}.

\noindent \textbf{Elimination (Line \ref{line:Eliminate}):} The arms in $\A_t$ are sorted in descending order of $\muh_{i,t}$, their top $1 / \eta$ fraction is kept, and the remaining arms are eliminated.

At the end of stage $s$, only one arm remains, which is \textit{returned} as the optimal arm. While this algorithmic design is standard in MABs, it is not obvious that it would be near-optimal in structured problems, as this paper shows.

\begin{algorithm}[tb]
    \caption{\algGSE: Generalized Successive Elimination}
    \label{alg:SE}
    \textbf{Input}:  Elimination hyper-parameter $\eta$, budget $B$\\
    \textbf{Initialization}: $\;\A_1 \gets \A$, $\;t \gets 1$, $\;s\gets\lceil\log_{\eta}K\rceil$
    
    \begin{algorithmic}[1] %
    \setcounter{ALC@unique}{0}
    \WHILE{$t\leq s$}
    \STATE \textbf{Projection}: Project $\A_t$ to $d_t$ dimensions, such that $\A_t$ spans $\R^{d_t}$\label{line:project}
    
    \STATE \textbf{Exploration}: Explore $\A_t$ using the allocation $\Pi_t$ \label{line:Explore}
    
    \STATE \textbf{Estimation}: Calculate $(\muh_{i,t})_{i\in \A_t}$ based on observed $X_t$ and $Y_t$, using Eqs. (\ref{eq:LinMuh}) or (\ref{eq:GLMMuh}) \label{line:Estimate}
    
    \STATE \textbf{Elimination}: $\A_{t+1}= \underset{\substack{\A\subset\A_t: |\A|=\lceil \frac{|\A_t|}{\eta}\rceil}}{\argmax} \sum_{i\in \A}\muh_{i,t}$ \label{line:Eliminate}
    \STATE $t \gets t + 1$
    \ENDWHILE
    \STATE \textbf{Output}: $\zeta$ such that $\A_{s+1} = \{\zeta\}$\label{line:Recommendation}
    \end{algorithmic}
\end{algorithm}

\section{Linear Model}
\label{sec:lin}

We start with the linear reward model, where $\mu_i = f(x_i) = x_i\T \theta_*$, for an unknown reward parameter $\theta_* \in \R^d$. The estimate $\tethat_t$ of $\theta_*$ in stage $t$ is computed using least-squares regression as $\tethat_t=V_{t}^{-1}b_{t}$, where $V_{t}=\sum_{j=1}^{n}X_{j,t}X_{j,t}\T$ is the sample covariance matrix, and $b_{t}=\sum_{j=1}^{n}X_{j,t}Y_{j,t}$. This gives us the following mean estimate for each arm $i\in\A_t$,
\begin{align}\label{eq:LinMuh}
    \muh_{i,t}=x_i\T\tethat_t\,.
\end{align}
The matrix $V_t^{-1}$ is well-defined as long as $X_t$ spans $\R^d$. However, since \algGSE eliminates arms, it may happen that the arms in later stages do not span $\R^d$. Thus, $V_t$ could be singular and $V_t^{-1}$ would not be well-defined. We alleviate this problem by projecting\footnote{The projection can be done by multiplying the arm features with the matrix whose columns are the orthonormal basis of the subspace spanned by the arms \citep{yang2021minimax}.} the arms in $\A_t$ into their spanned subspace. We denote the dimension of this subspace by $d_t$. Alternatively, we can address the singularity issue by using the pseudo-inverse of matrices \citep{huang2021federated}. In this case, we remove the projection step, and replace $V_t^{-1}$ with its pseudo-inverse.

\subsection{Analysis}
\label{sec:Lin-Analysis}

In this section, we prove an error bound for \algGSE with the linear model. Although this error bound is a special case of that for GLMs (see \cref{thm:SE-opt-G-GLM}), we still present it because more readers are familiar with linear bandit analysis than GLMs. To reduce clutter, we assume that all logarithms have base $\eta$. We denote by $\Delta_i=\mu_1-\mu_i$, the sub-optimality gap of arm $i$, and by $\dmin = \min_{i > 1} \Delta_i$, the minimum gap, which by the assumption in \cref{sec:setting} is just $\Delta_2$.

\begin{theorem}\label{thm:SE-opt-G}
\algGSE with the linear model (\cref{eq:LinMuh}) and any valid\footnote{Allocation strategy $\Pi_t$ is valid if 
$V_t$ is invertible.
} allocation strategy $\Pi_t$ identifies the optimal arm with probability at least $1 - \delta$ for
\begin{align}\label{eq:Lin-Bd}
    \delta\leq 2\eta\lgk\exp\bigg(\frac{-\dmin^{2}\sigma^{-2}}{4\underset{i\in\A,t\in[s]}{\max} \vnorm{x_i-x_1}^2_{V_t^{-1}}} \bigg)\,.
\end{align} 
where $\vnorm{x}_{V}=\sqrt{x\T V x}$ for any $x\in\R^d$ and matrix $V\in\R^{d\times d}$. If we use the G-optimal design (\cref{alg:FWGopt}) for $\Pi_t$, then
\begin{align}\label{eq:G-opt-bnd}
    \delta\leq 2\eta\lgk\expp{\frac{-B\dmin^{2}}{4\sigma^2d\lgk }}\,.
\end{align}
\end{theorem}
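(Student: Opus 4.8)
The plan is to control the single failure event that arm~$1$ is eliminated at some stage, and to decompose it over the $s=\lgk$ stages of \algGSE. Conditioned on arm~$1$ surviving into stage~$t$ (so $1\in\A_t$ and $x_1$ lies in the spanned subspace used by the projection), the error in the estimated pairwise gap is $\muh_{i,t}-\muh_{1,t}-(\mu_i-\mu_1)=(x_i-x_1)\T(\tethat_t-\theta_*)$. First I would substitute the least-squares closed form $\tethat_t-\theta_*=V_t^{-1}\sum_{j=1}^{n}X_{j,t}\epsilon_{j,t}$ to write this error as a weighted sum of the independent $\sigma^2$-sub-Gaussian noises $\epsilon_{j,t}$; it is therefore sub-Gaussian with variance proxy $\sigma^2(x_i-x_1)\T V_t^{-1}V_tV_t^{-1}(x_i-x_1)=\sigma^2\vnorm{x_i-x_1}^2_{V_t^{-1}}$. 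The sub-Gaussian tail then gives $\prob{\muh_{i,t}\geq\muh_{1,t}}=\prob{(x_i-x_1)\T(\tethat_t-\theta_*)\geq\Delta_i}\leq\expp{-\Delta_i^2/(2\sigma^2\vnorm{x_i-x_1}^2_{V_t^{-1}})}$, and since $\Delta_i\geq\dmin$ for every surviving $i\neq1$, each such probability is at most the uniform worst-case term $p:=\expp{-\dmin^2/(2\sigma^2\max_{i\in\A,t\in[s]}\vnorm{x_i-x_1}^2_{V_t^{-1}})}$.

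Next I would turn this pairwise control into a per-stage elimination bound. Because \algGSE keeps the top $\lceil|\A_t|/\eta\rceil$ arms ranked by $\muh_{\cdot,t}$, arm~$1$ can be eliminated in stage~$t$ only if at least $\lceil|\A_t|/\eta\rceil$ of the other arms satisfy $\muh_{i,t}\geq\muh_{1,t}$. Letting $N_t$ count these arms, linearity of expectation gives $\E[N_t]\leq|\A_t|\,p$, and Markov's inequality yields $\prob{N_t\geq\lceil|\A_t|/\eta\rceil}\leq|\A_t|\,p\,\eta/|\A_t|=\eta\,p$. A union bound over the $s=\lgk$ stages then gives $\delta\leq\eta\lgk\,p$, which is \cref{eq:Lin-Bd} up to the displayed constants; the factor $2$ in the prefactor comes from using a two-sided tail, and the factor $4$ in the exponent from the routine slack in bounding the gap variance.

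Finally, for \cref{eq:G-opt-bnd} I would specialize the denominator in \cref{eq:Lin-Bd} to the allocation of \cref{alg:FWGopt}. Writing the allocation as $\Pi_t(i)\approx n\,\pi_t(i)$ for a design $\pi_t$ over the surviving arms, the sample covariance satisfies $V_t\approx n\sum_i\pi_t(i)x_ix_i\T$, and the Kiefer--Wolfowitz equivalence theorem bounds the G-optimal value by the subspace dimension, $\max_i x_i\T V_t^{-1}x_i\leq d_t/n\leq d/n$; the difference norm then obeys $\vnorm{x_i-x_1}^2_{V_t^{-1}}\leq 2x_i\T V_t^{-1}x_i+2x_1\T V_t^{-1}x_1=O(d/n)$ (or is bounded directly if the design targets the difference vectors). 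Substituting the per-stage budget $n=\lfloor B/s\rfloor$ with $s=\lgk$ turns the exponent into $-B\dmin^2/(4\sigma^2d\lgk)$, leaving the $2\eta\lgk$ prefactor untouched. The hard part will be the per-stage step rather than the concentration: arm~$1$'s elimination is an order-statistic event and not a single pairwise comparison, so the counting-plus-Markov argument (with the ceiling/floor bookkeeping and the passage from the stage-wise to the global maximum) is where the bound is genuinely earned, and obtaining the clean G-optimal value for the \emph{difference} vectors $x_i-x_1$, which the standard design does not optimize directly, is the second delicate point.
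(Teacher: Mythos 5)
Your proposal follows essentially the same route as the paper's proof: the same Hoeffding/sub-Gaussian tail bound on $(x_i-x_1)\T(\tethat_t-\theta_*)$ written as a weighted sum of the stage-$t$ noises (the paper's \cref{lem:prob-elim-best}), the same counting-plus-Markov argument for the per-stage elimination event (\cref{lem:prob-err-one-stage-OPT-G}), a union bound over the $\lgk$ stages, and the Kiefer--Wolfowitz theorem to specialize to the G-optimal design. The only differences are cosmetic: you use a one-sided tail where the paper uses a two-sided one, and your ``routine slack'' in passing from $\vnorm{x_i-x_1}^2_{V_t^{-1}}$ to $\max_i\vnorm{x_i}^2_{V_t^{-1}}\leq d_t/n$ is exactly the triangle-inequality step the paper itself performs (with the same looseness in the resulting constant).
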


We sketch the proof in \cref{subsec:proof-sketch-linear} and defer the detailed proof to \cref{app:pfs}.

The error bound in \eqref{eq:G-opt-bnd} scales as expected. Specifically, it is tighter for a larger budget $B$, which increases the statistical power of \algGSE; and a larger gap $\dmin$, which makes the optimal arm easier to identify. The bound is looser for larger $K$ and $d$, which increase with the instance size; and larger reward noise $\sigma$, which increases uncertainty and makes the problem instance harder to identify. We compare this bound to the related works in \cref{sec:relatedWorks}. 

There is no lower bound for FB BAI in structured bandits. Nevertheless, in the special case of MABs, our bound (\eqref{eq:G-opt-bnd}) matches the FB BAI lower bound $\expp{\frac{-B}{\sum_{i\in \A}\Delta_i^{-2}}}$ in~\citet{kaufmann2016complexity}, up to a factor of $\log K$. It also roughly matches the tight lower bound of~\citet{carpentier2016tight}, which is $\expp{\frac{-B}{\log(K)\sum_{i\in \A}\Delta_i^{-2}}}$. To see this, note that $\sum_{i \in \A} \Delta_i^{-2} \approx K \dmin^{-2}$ and $d = K$, when we apply \algGSE to a $K$-armed bandit problem.

\subsection{Proof Sketch}
\label{subsec:proof-sketch-linear}

The key idea in analyzing \algGSE is to control the probability of eliminating the optimal arm in each stage. Our analysis is modular and easy to extend to other elimination algorithms. %
Let $E_t$ be the event that the optimal arm is eliminated in stage $t$. Then,
$
    \delta=\prob{\cup_{t = 1}^s E_t}
  \leq \sum_{t=1}^s \prob{E_t|\bar{E}_1, \dots, \bar{E}_{t - 1}}\,,
$
where $\bar{E}_t$ is the complement of event $E_t$.
In \cref{lem:prob-elim-best}, we bound the probability that a suboptimal arm has a higher estimated mean reward than the optimal arm. This is a novel concentration result for linear bandits in successive elimination algorithms.
\begin{lemma}
\label{lem:prob-elim-best}
In \algGSE with the linear model of \cref{eq:LinMuh}, the probability that any suboptimal arm $i$ has a higher estimated mean reward than the optimal arm in stage $t$ satisfies
$
    \prob{\muh_{i,t}>\muh_{1,t}}\leq 2\exp\big(\frac{-\Delta_{i}^2\sigma^{-2}}{2\vnorm{x_i-x_1}^2_{V_t^{-1}}}\big)\,.
$
\end{lemma}
This lemma is proved using an argument mainly driven from a concentration bound. Next, we use it in \cref{lem:prob-err-one-stage-OPT-G} to bound the probability that the optimal arm is eliminated in stage $t$. %

\begin{lemma}\label{lem:prob-err-one-stage-OPT-G}
In \algGSE with the linear model (\cref{eq:LinMuh}), the probability that the optimal arm is eliminated in stage $t$ satisfies
 $
        \prob{\Etil_t}\leq 2\eta\exp\big(\frac{-\dmint^2\sigma^{-2}}{2\max_{i\in\A_t}\vnorm{x_i-x_1}^2_{V_t^{-1}}  }\big)\,,
 $
where $\dmint=\min_{i\in\A_t\backslash\{1\}}\Delta_i$ and $\Etil_t$ is a shorthand for event $E_t | \bar{E}_1, \dots, \bar{E}_{t - 1}$. 
\end{lemma}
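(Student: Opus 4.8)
The plan is to bound the conditional elimination probability by a first-moment (Markov) argument built on \cref{lem:prob-elim-best}. First I would translate the elimination rule into a counting event. Conditioned on $\bar{E}_1, \dots, \bar{E}_{t-1}$, the optimal arm still belongs to $\A_t$, and the elimination step retains the $m := \lceil |\A_t| / \eta \rceil$ arms with the largest estimates $\muh_{i,t}$. Hence arm $1$ is discarded only if at least $m$ of the remaining arms outrank it, so the event $\Etil_t$ is contained in $\{N_t \geq m\}$, where $N_t = \sum_{i \in \A_t \setminus \{1\}} \mathbf{1}\{\muh_{i,t} > \muh_{1,t}\}$ counts the arms beating the optimum (ties, which have probability zero under the sub-Gaussian noise, may be broken in favour of arm $1$).

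The second step is a first-moment bound. Since $N_t$ is a sum of indicators, linearity of expectation together with Markov's inequality gives $\prob{\Etil_t} \le \prob{N_t \ge m} \le \E[N_t] / m$, and crucially this requires only $\E[N_t]$, so the strong statistical dependence among the indicators (they all share the common estimate $\muh_{1,t}$) is irrelevant. I would then invoke \cref{lem:prob-elim-best} conditionally on the stage-$t$ filtration to obtain $\E[N_t] \le \sum_{i \in \A_t \setminus \{1\}} 2\expp{-\Delta_i^2 \sigma^{-2} / (2\vnorm{x_i - x_1}^2_{V_t^{-1}})}$.

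To collapse the sum into the stated closed form, I would replace each summand by its worst case over the surviving arms: using $\Delta_i \ge \dmint$ for every $i \in \A_t \setminus \{1\}$ together with $\vnorm{x_i - x_1}^2_{V_t^{-1}} \le \max_{j \in \A_t} \vnorm{x_j - x_1}^2_{V_t^{-1}}$, each exponent is at most $-\dmint^2 \sigma^{-2} / (2 \max_{j \in \A_t} \vnorm{x_j - x_1}^2_{V_t^{-1}})$. As there are fewer than $|\A_t|$ terms, $\E[N_t] < 2 |\A_t| \, \expp{-\dmint^2 \sigma^{-2} / (2 \max_{j \in \A_t} \vnorm{x_j - x_1}^2_{V_t^{-1}})}$. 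Dividing by $m \ge |\A_t| / \eta$ then cancels the $|\A_t|$ factor and yields the leading constant $2\eta$, which is exactly the claimed bound.

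I expect the only genuinely delicate point to be the conditioning: one must argue that \cref{lem:prob-elim-best} stays valid after conditioning on the survival events $\bar{E}_1, \dots, \bar{E}_{t-1}$ that determine the random set $\A_t$ and its covariance $V_t$. This is where the staged design pays off, since stage $t$ draws its own independent budget; conditioning on the history only fixes $\A_t$ and $V_t$ without biasing the stage-$t$ noise, so the per-arm concentration inequality applies verbatim. The monotonicity replacements and the Markov step are then routine.
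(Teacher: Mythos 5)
Your proposal is correct and follows essentially the same route as the paper's proof: both define the count $N_t$ of arms whose estimates exceed $\muh_{1,t}$, bound $\E[N_t]$ termwise via \cref{lem:prob-elim-best} together with $\Delta_i \ge \dmint$ and the worst-case norm, and finish with Markov's inequality at the threshold $\lceil |\A_t|/\eta\rceil \ge |\A_t|/\eta$ to produce the factor $2\eta$. The extra care you take about tie-breaking and about why the per-arm concentration bound survives conditioning on $\bar{E}_1,\dots,\bar{E}_{t-1}$ (fresh, independent noise in each stage) only makes explicit what the paper leaves implicit.
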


This lemma is proved by examining how another arm can dominate the optimal arm and using Markov's inequality. Finally, we bound $\delta$ in \cref{thm:SE-opt-G} using 
a union bound. We obtain the second bound in \cref{thm:SE-opt-G} by the Kiefer-Wolfowitz Theorem \citep{kiefer_wolfowitz_1960} for the G-optimal design described in \cref{sec:Gopt}. 

\section{Generalized Linear Model}\label{sec:GLM}

We now study FB BAI in \emph{generalized linear models (GLMs)} \citep{mccullagh2019generalized}, where $\mu_i = f(x_i) = \meanf(x_i\T\theta_*)$, where $\meanf$ is a monotone function known as the \emph{mean function}. As an example, $\meanf(x)=(1+\expp{-x})^{-1}$ in logistic regression. We assume that the derivative of the mean function, ${\meanf}'$, is bounded from below, i.e.,~$\cmin\leq {\meanf}'(x_i\T \tilde{\theta}_t)$, for some $\cmin\in\R^+$ and all $i\in\A$. Here $\tilde{\theta}_t$ can be any convex combination of $\theta_*$ and its \emph{maximum likelihood estimate} $\tethat_t$ in stage $t$. This assumption is standard in GLM bandits \citep{GLMUCB-Filippi-2010,UCBGLM-LiLZ17a}. 
The existence of $\cmin$ can be guaranteed by performing forced exploration at the beginning of each stage with the sampling cost of $O(d)$ \citep{Kveton-GLM-2019}. As $\tethat_t$ satisfies $\sum_{j=1}^n \big(Y_{j,t}-\meanf(X_{j,t}\T\tethat_t)\big)X_{j,t}=0$, it can be computed efficiently by \emph{iteratively reweighted least squares} \citep{wolke1988iteratively}. This gives us the following mean estimate for each arm $i\in\A_t$,
\begin{align}\label{eq:GLMMuh}
    \hat{\mu}_{i,t}=\meanf(x_i\T\tethat_t)\,.
\end{align}
 
\subsection{Analysis}\label{sec:GLM-analysis}

In \cref{thm:SE-opt-G-GLM}, we prove similar bounds to the linear model. The proof and its sketch are presented in \cref{app:GLMpfs}. These are the first BAI error bounds for GLM bandits. 

\begin{theorem}\label{thm:SE-opt-G-GLM}
\algGSE with the GLM (\cref{eq:GLMMuh}) and any valid $\Pi_t$ identifies the optimal arm with probability at least $1 - \delta$ for
\begin{align}\label{eq:GLMeq-Bd}
\delta\leq 2\eta\lgk\exp\Bigg(\frac{-\dmin^{2}\sigma^{-2}\cmin^2}{8\underset{i\in\A,t\in[s]}{\max} \vnorm{x_i}^2_{V_t^{-1}}}\Bigg)\,.
\end{align}
If we use the G-optimal design (\cref{alg:FWGopt}) for $\Pi_t$, then
\begin{align}\label{eq:Gopt-GLM}
    \delta\leq 2\eta\lgk\expp{\frac{-B\dmin^{2}\cmin^2}{8\sigma^2 d\lgk}}\,.
\end{align}
\end{theorem}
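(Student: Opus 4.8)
The plan is to mirror the proof architecture already laid out for the linear case in \cref{subsec:proof-sketch-linear}, adapting each concentration step to account for the nonlinearity of the mean function $\meanf$. The overall skeleton is identical: decompose the error probability via \eqref{eq:general_error} into a sum over stages of the conditional probability that the optimal arm is eliminated, bound each stage-wise elimination probability, and then apply the union bound over the $s = \lceil \lgk \rceil$ stages. The two substantive pieces that must be re-derived for the GLM are (i) a GLM analogue of \cref{lem:prob-elim-best}, bounding $\prob{\muh_{i,t} > \muh_{1,t}}$, and (ii) the corresponding GLM analogue of \cref{lem:prob-err-one-stage-OPT-G}.

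First I would establish the GLM version of \cref{lem:prob-elim-best}. The event $\muh_{i,t} > \muh_{1,t}$ is $\meanf(x_i\T\tethat_t) > \meanf(x_1\T\tethat_t)$, and since $\meanf$ is monotone this is equivalent to $x_i\T\tethat_t > x_1\T\tethat_t$, i.e.\ $(x_i - x_1)\T\tethat_t > 0$. The key device is the mean-value expansion of the estimating equation: writing $\meanf(X_{j,t}\T\tethat_t) - \meanf(X_{j,t}\T\theta_*) = \meanf'(X_{j,t}\T\tilde\theta)\,X_{j,t}\T(\tethat_t - \theta_*)$ for some convex combination $\tilde\theta$, the score equation $\sum_j (Y_{j,t} - \meanf(X_{j,t}\T\tethat_t))X_{j,t} = 0$ linearizes to relate $\tethat_t - \theta_*$ to the noise-weighted sum $\sum_j \epsilon_{j,t} X_{j,t}$, with the covariance rescaled by $\meanf'$. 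Using the lower bound $\cmin \leq \meanf'$ to control the relevant inverse-covariance weighting, one obtains a sub-Gaussian tail on $(x_i - x_1)\T(\tethat_t - \theta_*)$ of the same form as in \cref{lem:prob-elim-best}, but with the gap $\Delta_i = \meanf(x_1\T\theta_*) - \meanf(x_i\T\theta_*)$ converted back into a gap in the linear predictor space via a factor of $\cmin$. This is precisely where the extra $\cmin^2$ factor in the exponent of \eqref{eq:GLMeq-Bd} and the change from $\vnorm{x_i - x_1}^2_{V_t^{-1}}$ to $\vnorm{x_i}^2_{V_t^{-1}}$ (with the accompanying factor-of-$2$ adjustments giving the $8$ instead of $4$) both originate.

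With that concentration bound in hand, the remaining steps are essentially bookkeeping. I would prove the GLM analogue of \cref{lem:prob-err-one-stage-OPT-G} by observing that the optimal arm is eliminated in stage $t$ only if at least a constant fraction of the surviving arms dominate it in estimated mean; bounding the expected number of such dominating arms via the per-arm bound and applying Markov's inequality yields a factor of $2\eta$ in front, with $\dmint$ in the exponent. Summing over stages through \eqref{eq:general_error}, replacing $\dmint$ by $\dmin$, and bounding the number of stages by $\lgk$ gives the first bound \eqref{eq:GLMeq-Bd}. For the second bound \eqref{eq:Gopt-GLM}, I would invoke the Kiefer--Wolfowitz theorem exactly as in the linear case: under the G-optimal design the maximum weighted norm $\max_{i} \vnorm{x_i}^2_{V_t^{-1}}$ is controlled by $d_t / n \leq d\,\lgk / B$, and substituting this into \eqref{eq:GLMeq-Bd} produces the stated bound.

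The main obstacle I anticipate is making the mean-value linearization of the score equation fully rigorous while keeping the constants clean. Unlike the linear case, where $\tethat_t - \theta_* = V_t^{-1}\sum_j \epsilon_{j,t} X_{j,t}$ is exact, in the GLM the intermediate point $\tilde\theta$ depends on the data, so the effective covariance matrix $\sum_j \meanf'(X_{j,t}\T\tilde\theta)X_{j,t}X_{j,t}\T$ is random and only bounded below (via $\cmin$) rather than equal to $V_t$. Carefully controlling this data-dependent reweighting---and ensuring the sub-Gaussian noise tail survives the inversion of a matrix that is only lower-bounded in the Loewner order---is the technical crux; the assumption $\cmin \leq \meanf'$ together with the forced-exploration guarantee of invertibility is exactly what is needed to push this through, and everything downstream follows the linear template.
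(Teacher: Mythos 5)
Your overall route is the paper's own: linearize the maximum-likelihood estimate through the score equation and a mean-value expansion, writing $\tethat_t - \theta_* = \La^{-1}\sum_{j=1}^n X_{j,t}\epsilon_{j,t}$ with $\La = \sum_{j=1}^n \meanf'(X_{j,t}\T\tilde\theta)X_{j,t}X_{j,t}\T$; use $\cmin \leq \meanf'$ to dominate the random matrix $\La$ by $\cmin V_t$; apply Hoeffding per arm (\cref{lem:prob-elim-best-GLM}), Markov per stage (\cref{lem:prob-err-one-stage-OPT-GLM-G}), then the union bound over the $\lgk$ stages and Kiefer--Wolfowitz for \eqref{eq:Gopt-GLM}. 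Your second and third steps reproduce the paper's assembly essentially verbatim, and you deserve credit for flagging the data-dependence of $\La$ (through $\tilde\theta$) as the delicate point in the Hoeffding application --- a subtlety the paper's own proof passes over silently --- although your assertion that $\cmin$ plus forced exploration ``is exactly what is needed to push this through'' is itself an assertion rather than a proof.

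The genuine problem is your account of where $\cmin^2$ enters, and that step would fail as written. You take $\Delta_i$ to be the mean-reward gap $\meanf(x_1\T\theta_*) - \meanf(x_i\T\theta_*)$ and claim it can be ``converted back into a gap in the linear predictor space via a factor of $\cmin$.'' The inequality goes the wrong way. The error event $x_i\T\tethat_t > x_1\T\tethat_t$ requires the noise term to exceed the \emph{linear-predictor} gap $(x_1 - x_i)\T\theta_*$, so you need a \emph{lower} bound on that gap in terms of $\Delta_i$. But the mean-value theorem with $\meanf' \geq \cmin$ only yields $\meanf(x_1\T\theta_*) - \meanf(x_i\T\theta_*) \geq \cmin\,(x_1 - x_i)\T\theta_*$, i.e., $(x_1 - x_i)\T\theta_* \leq \Delta_i/\cmin$ --- an upper bound, useless for controlling the error probability. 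A usable lower bound $(x_1-x_i)\T\theta_* \geq \Delta_i/L$ would require an upper (Lipschitz) bound $L$ on $\meanf'$, which is nowhere assumed. The paper sidesteps this entirely: its GLM proofs explicitly state that $\Delta_i$ denotes ``the gap before the mean function transformation,'' i.e., the linear-predictor gap, so no conversion ever occurs, and the entire $\cmin^2$ factor comes from the covariance step $\vnorm{W_i}_2^2 = x_i\T\La^{-1}V_t\La^{-1}x_i \leq \cmin^{-2}\vnorm{x_i}^2_{V_t^{-1}}$; the $8$ and the appearance of the separate norms $\vnorm{x_i}_{V_t^{-1}}$ and $\vnorm{x_1}_{V_t^{-1}}$ come from splitting the event into two one-sided deviations of size $\Delta_i/2$ around $\theta_*$, not from any gap conversion. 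So to complete your proof you must either interpret $\dmin$ in \cref{thm:SE-opt-G-GLM} on the linear-predictor scale, as the paper implicitly does, or add an upper bound on $\meanf'$ to the assumptions; the plan as stated establishes neither version of the exponent.
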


The error bounds in \cref{thm:SE-opt-G-GLM} are similar to those in the linear model (\cref{sec:Lin-Analysis}), since
$%
  \max_{i\in\A,t\in[s]}\vnorm{x_i-x_1}_{V_t^{-1}}
  \leq 2\max_{i\in\A,t\in[s]}\vnorm{x_i}_{V_t^{-1}}\,.
$%
The only major difference is in factor $\cmin^2$, which is $1$ in the linear case. This factor arises because GLM is a linear model transformed through some non-linear mean function $h$. When $\cmin$ is small, $h$ can have flat regions, which makes the optimal arm harder to identify. Therefore, our GLM bounds become looser as $\cmin$ decreases. Note that the bounds in~\cref{thm:SE-opt-G-GLM} depend on all other quantities same as the bounds in \cref{thm:SE-opt-G} do.

The novelty in our GLM analysis is in how we control the estimation error of $\theta_*$ using our assumptions on the existence of $\cmin$. The rest of the proof follows similar steps to those in \cref{subsec:proof-sketch-linear} and are postponed to \cref{app:GLMpfs}.

\section{G-Optimal Allocation}\label{sec:Gopt} 

The stochastic error bounds in~\eqref{eq:Lin-Bd} and~\eqref{eq:GLMeq-Bd} can be optimized by minimizing  $2\max_{i\in\A,t\in[s]}\vnorm{x_i}_{V_t^{-1}}$ with respect to $V_t$, in particular, with respect to $X_t$. In each stage $t$, let $g_t(\pi, x_i) = \vnorm{x_i}^2_{V_t^{-1}}$, where $V_t=n\sum_{i\in\A_t}\pi_ix_ix_i\T$ and $\sum_{i\in\A_t}\pi_i=1$. Then, optimization of $V_t$ is equivalent to solving $\min_\pi \max_{i\in\A_t} g_t(\pi, x_i)$. This leads us to the G-optimal design \citep{kiefer_wolfowitz_1960}, which minimizes the maximum variance along all $x_i$.

We develop an algorithm based on the Frank-Wolfe (FW) method~\citep{FW-jaggi13} to find the G-optimal design. \cref{alg:FWGopt} contains the pseudo-code of  it, which we refer to as \algFW. The G-optimal design is a convex relaxation of the G-optimal allocation; an allocation is the (integer) number of samples per arm while a design is the proportion of $n$ for each arm. Defining $g_t(\pi)=\max_{i\in\A_t} g_t(\pi, x_i)$, by Danskin's theorem \citep{danskin1966theory}, we know
$
    \nabla_{\pi_j} g_t(\pi)=
    -n(x_j\T V_t^{-1}x_{\max})^2,
$
where $x_{\max}=\argmax_{i\in\A_t} g_t(\pi, x_i)$. This gives us the derivative of the objective function so we can use it in a FW algorithm. In each iteration, \algFW first minimizes the 1st-order surrogate of the objective, and then uses line search to find the best step-size and takes a gradient step. After $N$ iterations, it extracts an allocation (integral solution) from $\pi_N$ using an efficient rounding procedure from \citet{allenzhu2017nearoptimal}, which we call it $\mathrm{ROUND}(n,\pi)$. This procedure takes budget $n$, design $\pi_N$, and returns an allocation $\Pi_t$.

In \cref{app:FWGopt}, we show that the error bounds of Theorems~\ref{thm:SE-opt-G} and~\ref{thm:SE-opt-G-GLM} still hold for large enough $N$, if we use \cref{alg:FWGopt} to obtain the allocation strategy $\Pi_t$ at the exploration step (Line~\ref{line:Explore} of \cref{alg:SE}). This results in the deterministic bounds in ~\eqref{eq:G-opt-bnd} and~\eqref{eq:Gopt-GLM} in these theorems.

\begin{algorithm}[tb]
    \caption{Frank-Wolfe G-optimal allocation (\algFW)}\label{alg:FWGopt}
    \begin{algorithmic}[1]
    \setcounter{ALC@unique}{0}
        \STATE \textbf{Input}: Stage budget $n$, $\;N$ number of iterations
        \STATE \textbf{Initialization}: $\pi_0\gets(1,\dots,1)/|\A_t|\in\R^{|\A_t|}$, $\;i\gets 0$
         
         \WHILE{$i< N$}
            
            \STATE $\pi'_i\gets \argmin_{\pi':\vnorm{\pi'}_1=1} \nabla_{\pi} g_t(\pi_i)\T\pi'\;\;$ 
            \begin{small}\COMMENT{Surrogate}\end{small}
            
            \STATE $\gamma_i\gets\argmin_{\gamma\in[0,1]} g_t(\pi_i+\gamma(\pi'_i-\pi_i))\;\;$ 
            \begin{small}\COMMENT{Line search}\end{small}
            
            \STATE $\pi_{i+1}\gets\pi_i+\gamma_i(\pi'_i-\pi_i)\quad$ \begin{small}\COMMENT{Gradient step}\end{small}
            \STATE $i\gets i+1$
         \ENDWHILE
         \STATE \textbf{Output}: $\Pi_t =\mathrm{ROUND}(n,\pi_N)\;\;\;$ \begin{small}\COMMENT{Rounding}\end{small}
   \end{algorithmic}
\end{algorithm}

\section{Related Work}\label{sec:relatedWorks}

To the best of our knowledge, there is no prior work on FB BAI for GLMs and our results are the first in this setting. However, there are three related algorithms for FB BAI in linear bandits that we discuss them in detail here. Before we start, note that there is no matching upper and lower bound for FB BAI in any setting \citep{carpentier2016tight}. However, in MABs, it is known that \emph{successive elimination} is near-optimal \citep{carpentier2016tight}.

\algBG \citep{pmlr-v33-hoffman14} is a Bayesian version of the gap-based exploration algorithm in \citet{Gabillon-2012}. This algorithm models correlations of rewards using a Gaussian process. As pointed out by \citet{xu2017fully-LinGapE}, \algBG does not explore enough and thus performs poorly.
In \cref{app:BGbound}, we show under few simplifying assumptions that the error probability of \algBG is at most $KB\expp{\frac{-B\dmin^{2}}{32K}}$. Our error bound in \cref{eq:G-opt-bnd} is at most $ 2\eta\lgk\expp{\frac{-B\dmin^{2}}{4d\lgk } }$. Thus, it improves upon \algBG by reducing dependence on the number of arms $K$, from linear to logarithmic; and on budget $B$, from linear to constant. We provide a more detailed comparison of these bounds in \cref{app:BGbound}. %
Our experimental results in \cref{sec:experiments} support these observations and show that our algorithm always outperforms \algBG in the linear setting. 

\algPc \citep{katzsamuels2020empirical-peace} is mainly a FC BAI algorithm based on a transductive design, which is modified to be used in the FB setting. It minimizes the \emph{Gaussian-width} of the remaining arms with a progressively finer level of granularity. However, \algPc cannot be implemented exactly because the Gaussian width does not have a closed form and is computationally expensive to minimize. To address this, \citet{katzsamuels2020empirical-peace} proposed an approximation to \algPc, which still has some computational issues (see \cref{rem:PeaceDiffc} and \cref{sec:hard}). The error bound for \algPc, although is competitive, only holds for a relatively large budget (Theorem~7 in \citealt{katzsamuels2020empirical-peace}). We discuss this further in \cref{rem:PeaceB}. Although the comparison of their bound to ours is not straightforward, we show in \cref{app:Peace} that each bound can be superior in certain regimes that depend mainly on the relation of $d$ and $K$. 
In particular, we show two cases: (i) Based on few claims in \citet{katzsamuels2020empirical-peace} that are not rigorously proved (see (i) in \cref{app:Peace} for more details), their error bound is at most $2\lceil\log(d)\rceil\expp{\frac{-B\dmin^2}{\max_{i\in\A}\vnorm{x_i-x_1}_{V^{-1}}\log(d)}}$
which is better than our bound (\cref{eq:Lin-Bd}) only if $K> \exp(\exp(\log(d)\log\log(d)))$. (ii) We can also show that their bound is at most $2\lceil\log(d)\rceil\expp{\frac{-B\dmin^2}{d\log(K)\log(d)}}$ under the G-optimal design, which is worse than our error bound (\cref{eq:G-opt-bnd}).

In our experiments with \algPc in \cref{sec:experiments}, we implemented its approximation and it never performed better than our algorithm. We also show in \cref{sec:hard} that approximate \algPc is much more computationally expensive compared to our algorithm.

\algOD \citep{yang2021minimax} uses a G-optimal design in a sequential elimination framework for FB BAI. In the first stage, it eliminates all the arms except $\lceil d / 2\rceil$. This makes the algorithm prone to eliminating the optimal arm in the first stage, especially when the number of arms is larger than $d$. It also adds a linear (in $K$) factor to the error bound. 
In \cref{app:OD}, we provide a detailed comparison between the error bound of \algOD and ours, and show that similar to the comparison with \algPc, there are regimes where each bound is superior. However, we show that our bound is tighter in the more practically relevant setting of $K= \Omega(d^2)$. In particular, we show that their error is at most 
$\left(\frac{4K}{d}+3\log(d)\right) 
    \exp\left({\frac{(d^2-B)\Delta_{\min}^2}{32d\log(d)}}\right)
$.
Now assuming $K=d^q$ for some $q\in\R$, if we divide our bound (\cref{eq:G-opt-bnd}) with theirs, we obtain
$
    O\left(
   \frac{q\log(d)}{d^{q-1}+\log(d)}\exp\left({\frac{-d^2\Delta_{\min}^2}{d\log(d)}}\right)\right),\nonumber
$ 
which is less than 1, so in this case \emph{our error bound is tighter}. However, for $K<d(d+1)/2$, their bound is tighter. Finally, we note that our experiments in \cref{sec:experiments} and \cref{app:OD-Exp} support these observations. 

\begin{figure}[tb]
\centering
  \centering
  \includegraphics[ width=\linewidth]{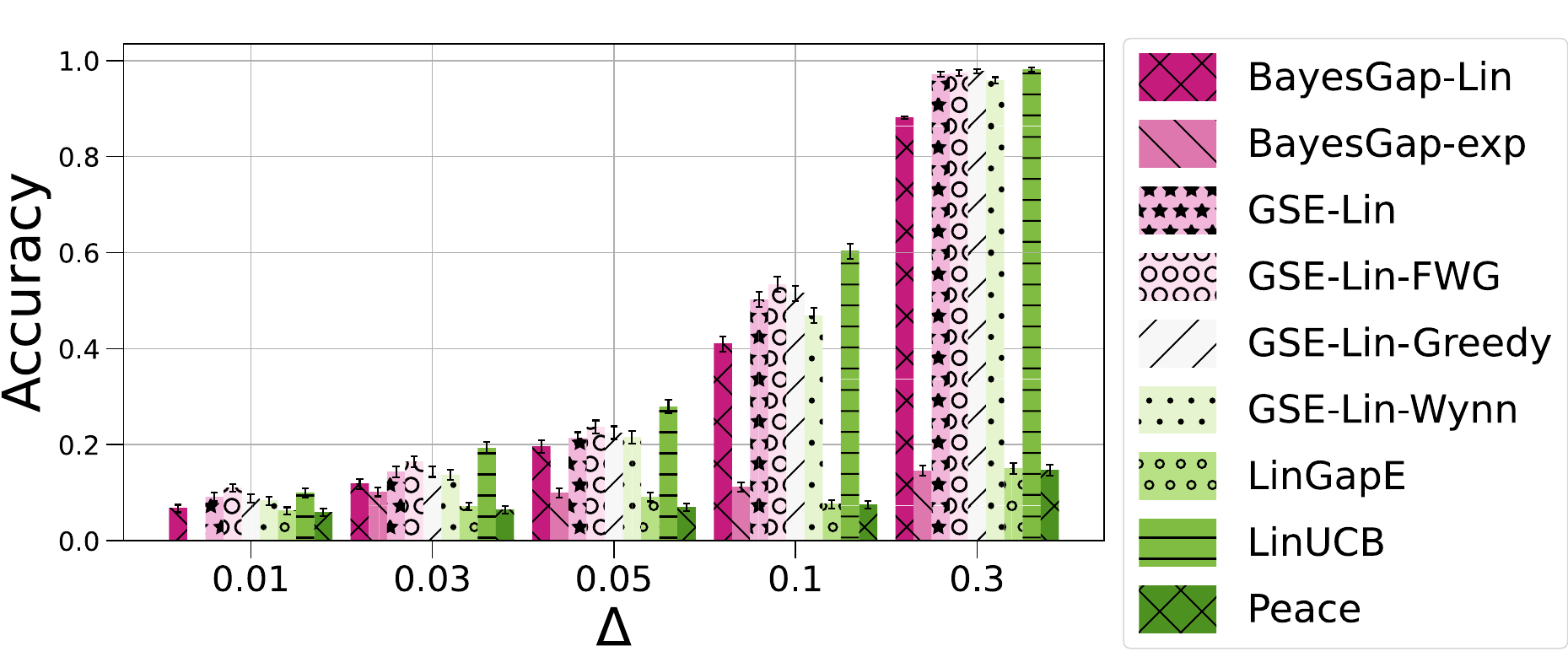}
 \caption{Static allocation.}
  \label{fig:synt5-delta}
\end{figure}

\section{Experiments}
\label{sec:experiments}

In this section, we compare \algGSE to several baselines including all linear FB BAI algorithms: \algPc, \algBG, and \algOD. Others are variants of cumulative regret (CR) bandits and FC BAI algorithms. For CR algorithms, the baseline stops at the budget limit and returns the most pulled arm.\footnote{In \cref{app:comparison}, we argue that this is a reasonable stopping rule.} We use \algLinUCB \citep{Li_2010-LinUCB} and \algUCBGLM \citep{UCBGLM-LiLZ17a}, which are the state-of-the-art for linear and GLM bandits, respectively. \algLinGapE (a FC BAI algorithm) \citep{xu2017fully-LinGapE} is used with its stopping rule at the budget limit. We tune its $\delta$ using a grid search and only report the best result. In \cref{app:variants}, we derive proper error bounds for these baselines to further justify the variants.

The \emph{accuracy} is an estimate of $1 - \delta$, as the fraction of $1000$ Monte Carlo replications where the algorithm finds the optimal arm. We run \algGSE with linear model and uniform exploration (\algGSELin), with \algFW (\algGSELinFWG), with sequential G-optimal allocation of \citet{soare2014bestarm} (\algGSELinG), and with Wynn's G-optimal method (\algGSELinWynn). For Wynn's method, see \citet{fedorov-theoryOpt}. We set $\eta = 2$ in all experiments, as this value tends to perform well in successive elimination \citep{AlmostOptimal-Karnin-2013}. For \algLinGapE, we evaluate the Greedy version (\algLinGapEG) and show its results only if it outperforms \algLinGapE. For \algLinGapEG, see \cite{xu2017fully-LinGapE}. In each experiment, we fix $K$, $B/K$, or $d$; depending on the experiment to show the desired trend. Similar trends can be observed if we fix the other parameters and change these. For further detail of our choices of kernels for \algBG and also our real-world data experiments, see \cref{app:Exp}.

\subsection{Linear Experiment: Adaptive Allocation}
\label{sec:hard}

We start with the example in \citet{soare2014bestarm}, where the arms are the canonical $d$-dimensional basis $e_1,e_2,\dots,e_d$ plus a disturbing arm $x_{d+1}=(\cos(\omega),\sin(\omega),0,\dots,0)\T$ with $\omega=1/10$. We set $\theta_*=e_1$ and $\epsilon\sim\mathcal{N}(0,10)$. Clearly the optimal arm is $e_1$, however, when the angle $\omega$ is as small as 1/10, the disturbing arm is hard to distinguish from $e_1$. As argued in \citet{soare2014bestarm}, this is a setting where an adaptive strategy is optimal (see \cref{sec:Adp.vs.Stat} for further discussion on Adaptive vs.~Static strategies).

\cref{Fig:lin-exp-1} shows that \algGSELinFWG is the second-best algorithm for smaller $K$ and the best for larger $K$. \algBGLin performs poorly here, and thus, we omit it. We \emph{conjecture} that \algBGLin fails because it uses Gaussian processes and there is a very low correlation between the arms in this experiment. \algLinGapE wins mostly for smaller $K$ and loses for larger $K$. This could be because its regret is linear in $K$ (\cref{app:comparison}). \algPc has lower accuracy than several other algorithms. We could only simulate \algPc for $K \leq 16$, since its computational cost is high for larger values of $K$. For instance, at $K = 16$, \algPc completes $100$ runs in $530$ seconds; while it only takes $7$ to $18$ seconds for the other algorithms. At $K = 32$, \algPc completes $100$ runs in $14$ hours (see \cref{app:exps}).

In this experiment, $K \approx d$ and both \algOD and \algGSE have $\log(K)$ stages and perform similarly. Therefore, we only report the results for \algGSE. This also happens in \cref{sec:staticEXp}.

\begin{figure}[tb]
\centering
  \centering
  \includegraphics[width=\textwidth]{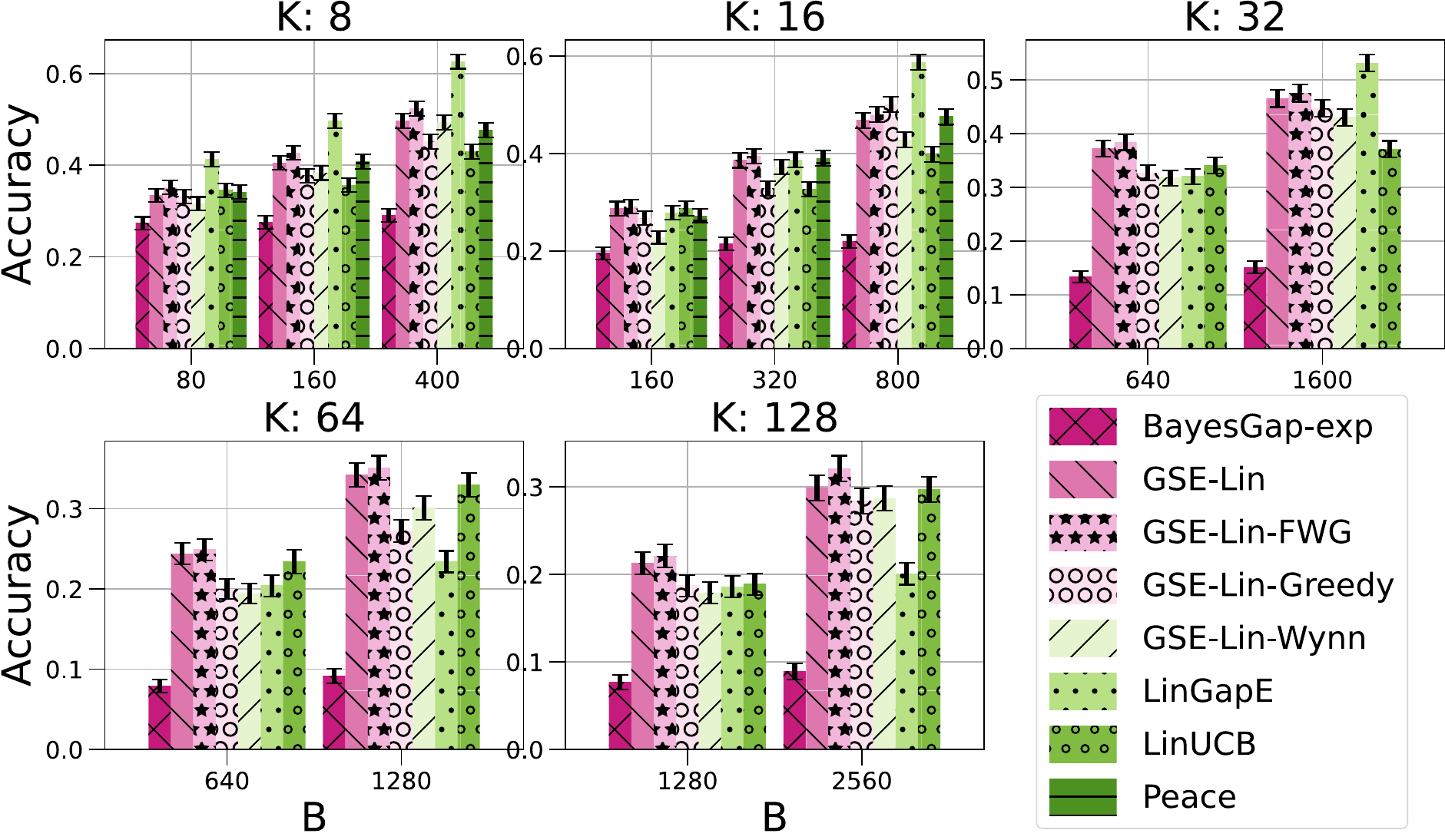}
    \caption{Adaptive instance for $d=K-1$.}\label{Fig:lin-exp-1}
\end{figure}

\subsection{Linear Experiment: Static Allocation}
\label{sec:staticEXp}
 
As in \citet{xu2017fully-LinGapE}, we take arms $e_1, e_2,...,e_{16}$ and $\theta_*=(\Delta,0,\dots,0)$, where $K=d=16$ and $B=320$. In this experiment, knowing the rewards does not change the allocation strategy. Therefore, a static allocation is optimal \citep{xu2017fully-LinGapE}. The goal is to evaluate the ability of the algorithm to adapt to a static situation.

Our results are reported in \cref{fig:synt5-delta}. We observe that \algLinUCB performs the best when $\Delta$ is small (harder instances). This is expected since suboptimal arms are well away from the optimal one, and CR algorithms do well in this case (\cref{app:comparison}). Our algorithms are the second-best when $\Delta$ is sufficiently large, converging to the optimal static allocation. \algBGexp, \algLinGapE, and \algPc cannot take advantage of larger $\Delta$, probably because they adapt to the rewards too early. This example demonstrates how well our algorithms adjust to a static allocation, and thus, properly address the tradeoff between static and adaptive allocation.

\subsection{Linear Experiment: Randomized}
\label{sec:randexp}

In this experiment, we use the example in \citet{pmlr-v80-tao18a} and \cite{yang2021minimax}. For each bandit instance, we generate i.i.d.\ arms sampled from the unit sphere centered at the origin with $d=10$. We let $\theta_*=x_i+0.01(x_j-x_i)$, where $x_i$ and $x_j$ are the two closest arms. As a consequence, $x_i$ is the optimal arm and $x_j$ is the disturbing arm. 
The goal is to evaluate the expected performance of the algorithms for a random instance to avoid bias in choosing the bandit instances.

We fix $B/K$ in \cref{fig:synt6} and compare the performance for different $K$. \algGSELinFWG has competitive performance with other algorithms. We can see that G-optimal policies have similar expected performance while \algFW is slightly better. Again, \algLinGapE performance degrades as $K$ increases and \algPc underperforms our algorithms. Moreover, the performance of \algOD worsens as $K$ increases, especially for $K>\frac{d(d+1)}{2}$. We report more experiments in this setting, comparing \algGSE to \algOD, in \cref{app:OD-Exp}. 

\begin{figure}[tb]
  \centering
  \includegraphics[ width=\linewidth]{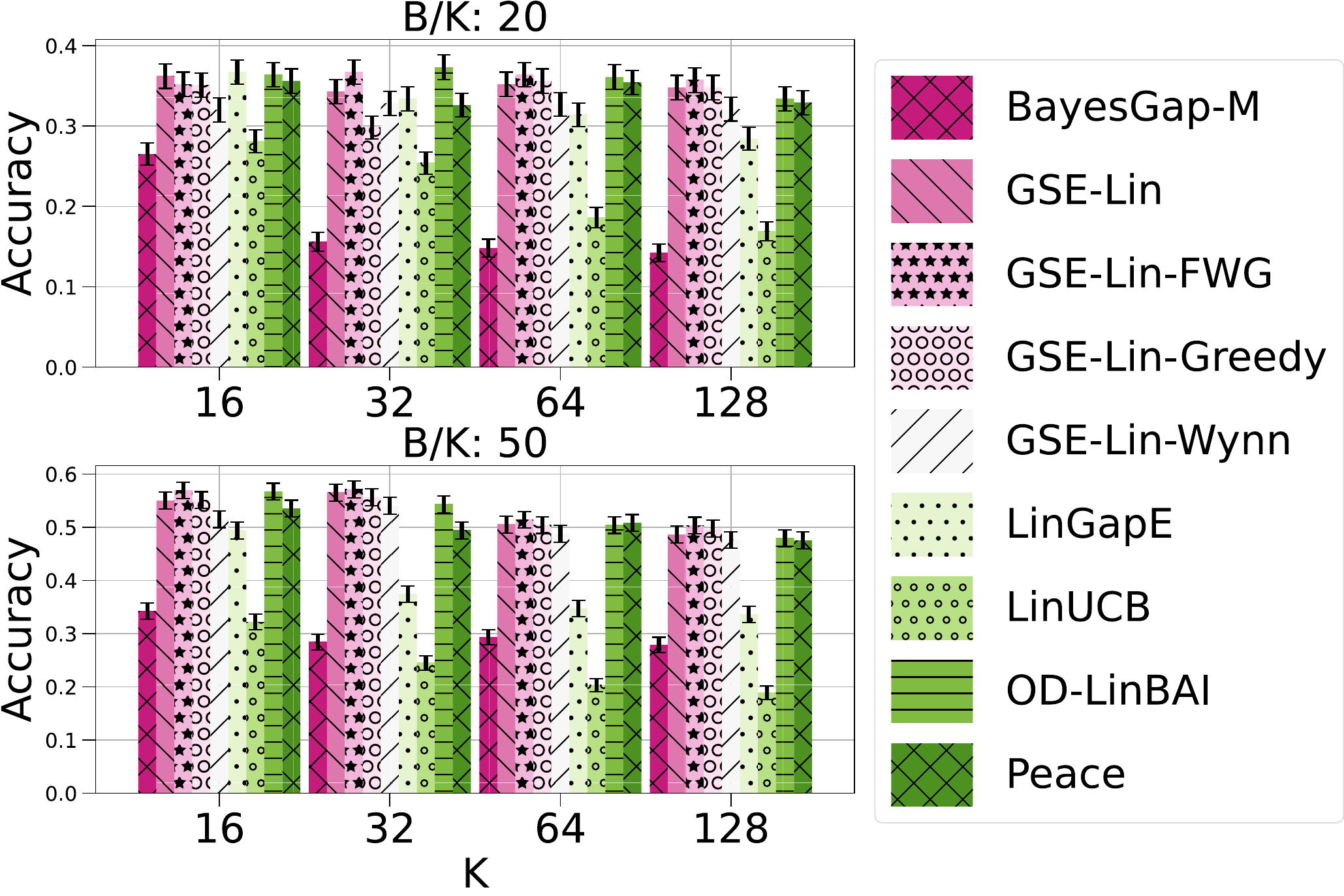}
  \caption{Randomized linear experiment.}
  \label{fig:synt6}
\end{figure}
\subsection{GLM Experiment}

As an instance of GLM, we study a \emph{logistic bandit}. We generate i.i.d.\ arms from uniform distribution on $[-0.5, 0.5]^d$ with $d \in \{5,7,10,12\}$, $K=8$, and $\theta_*\sim N(0,\frac{3}{d} I_d)$, where $I_d$ is a $d \times d$ identity matrix. The reward of arm $i$ is defined as $y_i \sim \text{Bern}(\meanf(x_i\T\theta_*))$, where $\meanf(z)=(1+\expp{-z})^{-1}$ and $\text{Bern}(z)$ is a Bernoulli distribution with mean $z$. 
We use \algGSE with a logistic regression model (\algGSELog) and also with the linear models to evaluate the robustness of \algGSE to \textit{model misspecification}. For exploration, we only use \algFW  (\algGSELogFWG), as it performs better than the other G-optimal allocations in earlier experiments. We also use a modification of \algUCBGLM \citep{UCBGLM-LiLZ17a}, a state-of-the-art GLM CR algorithm, for FB BAI.

The results in \cref{fig:logr} show \algGSE with logistic models outperforms linear models, and \algFW improves on uniform exploration in the GLM case. These experiments also show the robustness of \algGSE to model misspecification, since the linear model only slightly underperforms the logistic model. \algUCBGLM results confirm that CR algorithms could fail in BAI. \algBGM falls short for $B / K \geq 50$; the extra $B$ in their error bound also suggests failure for large $B$. In contrast, the performance of \algGSE keeps improving as $B$ increases.

\begin{figure}[tb]
\centering
  \centering
  \includegraphics[width=\textwidth]{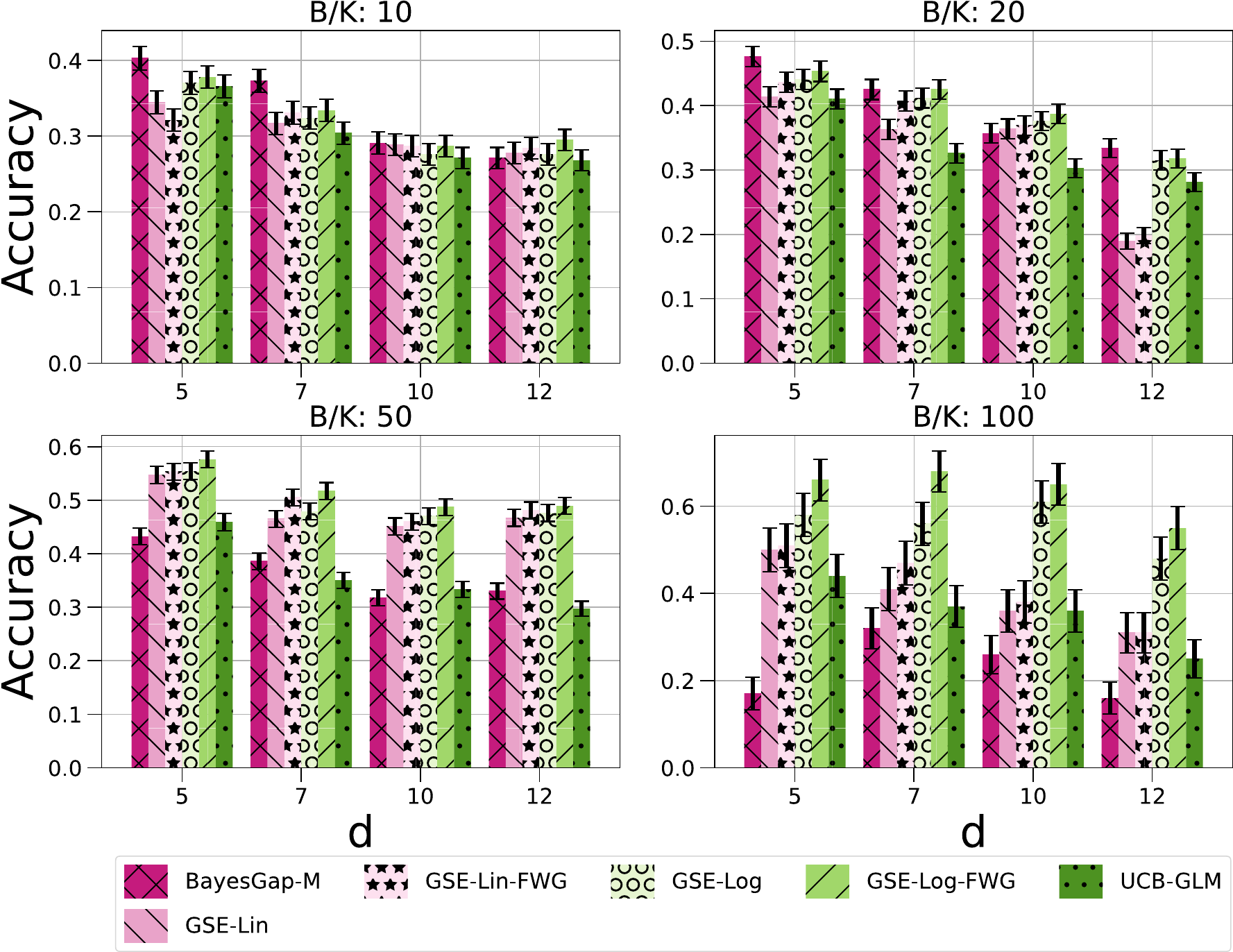}%
    \caption{Logistic bandit experiment for $K=8$.}
    \label{fig:logr}
\end{figure}

\section{Conclusions}
In this paper, we studied fixed-budget best-arm identification (BAI) in linear and generalized linear models. We proposed the \algGSE algorithm, which 
offers an adaptive framework for structured BAI. Our performance guarantees are near-optimal in MABs.
In generalized linear models, our algorithm is the first practical fixed-budget BAI algorithm with analysis. Our experiments show the efficiency and robustness (to model misspecification) of our algorithm. Extending our \algGSE algorithm to more general models could be a future direction (see \cref{app:Heur}). 

\bibliographystyle{named}
\bibliography{refs}

\clearpage
\appendix

\section{Linear Model Proofs}\label{app:pfs}

We let $K=\eta^l$ for some integer $l\geq 2$ so $s=\lgk=l$. This is for ease of reading in the analysis and proofs. We can deal with the cases in which $l$ is not an integer using rounding operators.

\begin{proof}[Proof of \cref{lem:prob-elim-best}]
Fix stage $t$. Since $t$ is fixed, we drop it in the rest of the proof. We start with
\begin{align}
\prob{\muh_{i}>\muh_{1}}&=
\prob{x_i\T\tethat>x_1\T\tethat}\\&=
\prob{(x_i-x_1)\T\tethat>0}\nonumber\\
&=\prob{(x_i-x_1)\T V^{-1}\sum_{j=1}^{n} X_{j} Y_{j}>0}\nonumber\\&=\prob{(x_i-x_1)\T V^{-1}(V\T\theta_*+\sum_{j=1}^{n}X_{j}\epsilon_{j})>0}\nonumber\\
&=
\prob{(x_i-x_1)\T V^{-1}\sum_{j=1}^{n} X_{j}\epsilon_{j}>(x_1-x_i)\T\theta_*}\label{eq:first}\;.
\end{align}

Now since $\{\epsilon_{j}\}_{j\in[n]}$ are independent, mean zero, $\sigma^2$-sub-Gaussian random variables, if we define 
\begin{align*}
    W_i&=(W_{i,1},\dots,W_{i,n})\\
    &=(x_i-x_1)\T V^{-1}\bigg(X_{1}, \cdots, X_{n}\bigg)\in \R^n\,,
\end{align*}
then by Hoeffding's inequality (Theorem 2.6.3 in \citep{Vershynin-HDP-2019}) we can write and bound \cref{eq:first} as follows
\begin{align}
\prob{\sum_{j=1}^{n} W_{i,j}\epsilon_{j}>(x_1-x_i)\T\theta_*}&\leq 2\expp{\frac{-((x_1-x_i)\T\theta_*)^2}{2\sigma^2\vnorm{W_i}_2^2}}\nonumber\\
&=2\expp{\frac{-\Delta_i^2}{2\sigma^2\vnorm{W_i}_2^2}}\label{eq:xiWi}\;.
\end{align}
It is important that the noise and features are independent. This is how we do not need adaptivity in each stage. Now since
\begin{align*}
  \vnorm{W_i}_2^2 = W_i W_i\T
  &= (x_i-x_1)\T V^{-1}V V^{-1} (x_i-x_1)\nonumber
  \\
  &=(x_i-x_1)\T V^{-1} (x_i-x_1)\;.
\end{align*} 
We can bound \cref{eq:xiWi} as follows
\begin{align*}
    2\expp{\frac{-\Delta_i^2}{2\sigma^2\vnorm{W_i}_2^2}}\leq 2\expp{\frac{-\Delta_{i}^2}{2\sigma^2\vnorm{x_i-x_1}^2_{V^{-1}}}}\;.
\end{align*}
\end{proof}

\begin{proof}[Proof of \cref{lem:prob-err-one-stage-OPT-G}]
Here we want to bound $\prob{\Etil_t}$. Let
$N_t$ denote the number of arms in $\A_t$ whose $\muh$ is larger than $\muh_1$. Then by \cref{lem:prob-elim-best}, we have
\begin{align*}
    \E(N_t)&=\sum_{i\in \A_t}\prob{\muh_{i,t}>\muh_{1,t}}
    \\
    &\leq \sum_{i\in \A_t}2\expp{\frac{-\Delta_{i}^2}{2\sigma^2\vnorm{x_i-x_1}^2_{V_t^{-1}}}}
    \\
    &\leq \sum_{i\in \A_t}2\expp{\frac{-\dmint^2}{2\sigma^2\vnorm{x_i-x_1}^2_{V_t^{-1}}}}
    \\
    &\leq 2|\A_t|\expp{\frac{-\dmint^2}{2\sigma^2\max_{i\in\A_t}\vnorm{x_i-x_1}^2_{V_t^{-1}}}}\;.
\end{align*}

Now Markov inequality gives
\begin{align*}
\prob{\Etil_t}
&=\prob{N_t\geq \frac{1}{\eta}|\A_t|}
\\
&\leq\frac{\eta\E(N_t)}{|\A_t|}
\\
&\leq 2\eta\expp{\frac{-\dmint^2}{2\sigma^2\max_{i\in\A_t}\vnorm{x_i-x_1}^2_{V_t^{-1}}  } }\;.
\end{align*}

\end{proof}

\begin{proof}[Proof of \cref{thm:SE-opt-G}]
By \cref{lem:prob-err-one-stage-OPT-G} the optimal arm is eliminated in one of the $s=\lgk$ stages with probability at most
\begin{align*}
    \delta&\leq \sum_{t=1}^s \prob{\Etil_t}\\
    &\leq2\sum_{t=1}^{s} \eta\expp{\frac{-\dmint^2}{2\sigma^2\max_{i\in\A_t}\vnorm{x_i-x_1}^2_{V_t^{-1}} } }\\
    &\leq 2\eta\lgk\expp{\frac{-\dmin^{2}}{2\sigma^2 \max_{i\in\A_t,t\in[s]}\vnorm{x_i-x_1}^2_{V_t^{-1}}} }
    \\
    &\leq 2\eta\lgk\expp{\frac{-\dmin^{2}}{4\sigma^2 \max_{i\in\A_t,t\in[s]}\vnorm{x_i}^2_{V_t^{-1}}} }\;.
\end{align*}

where we used Cauchy-Schwarz and Triangle inequality in the last inequality. Now by Kiefer-Wolfowitz Theorem \citep{kiefer_wolfowitz_1960}, we know that under the G-optimal or D-optimal design
\begin{align*}
    g_t(\pi^*)=\max_{i\in\A_t}\vnorm{x_i}^2_{V_t^{-1}}=d_t/n\leq\frac{d\lgk}{B},\quad \forall t\;.
\end{align*}
Therefore, $\max_{i\in\A_t, t\in[s]}\vnorm{x_i}^2_{V_t^{-1}}\leq d_t/n\leq\frac{d\lgk}{B}$ and 
\begin{align}
    2\eta\lgk\expp{\frac{-\dmin^{2}}{4\sigma^2 \max_{i\in\A,t\in[s]}\vnorm{x_i}^2_{V_t^{-1}}} }\nonumber\\
    \leq 2\eta\lgk\expp{\frac{-B\dmin^{2}}{4d\sigma^2 \lgk } }\label{eq:Gopt-bd}\;.
\end{align}

\end{proof}

\section{GLM Proofs}\label{app:GLMpfs}

The novelty in our GLM analysis is in how we control the estimation error of $\theta_*$ using our assumptions on the existence of $\cmin$. The rest of the proof follows similar steps to those in the linear model. The key idea is to obtain error bounds in each stage. First, in \cref{lem:prob-elim-best-GLM}, we bound the probability that a suboptimal arm has a higher estimated mean reward than the optimal arm. %

\begin{lemma}
\label{lem:prob-elim-best-GLM}
In \algGSE with the GLM of \cref{eq:GLMMuh}, the probability that any suboptimal arm $i$ has a higher estimated mean reward than the optimal arm in stage $t$ satisfies
\begin{align*}
    \prob{\muh_{i,t}&>\muh_{1,t}}\leq \\ &\expp{\frac{-\Delta_i^2\sigma^{-2}\cmin^2}{8\vnorm{x_i}_{V_t^{-1}}^2}}+\expp{\frac{-\Delta_i^2\sigma^{-2}\cmin^2}{8\vnorm{x_1}_{V_t^{-1}}^2}}.
\end{align*}
\end{lemma}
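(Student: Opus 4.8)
The plan is to mirror the structure of the linear case (\cref{lem:prob-elim-best}), but to handle the extra difficulty that in a GLM the estimated means $\muh_{i,t}=\meanf(x_i\T\tethat_t)$ depend on $\tethat_t$ through the nonlinear mean function $\meanf$. The event $\{\muh_{i,t}>\muh_{1,t}\}$ is, since $\meanf$ is monotone increasing, exactly the event $\{x_i\T\tethat_t>x_1\T\tethat_t\}$, i.e.\ $\{(x_i-x_1)\T\tethat_t>0\}$. So I would first reduce the statement about the nonlinear estimates to a statement about the linear predictor $(x_i-x_1)\T\tethat_t$, exactly as in the linear proof. The real work is then to show $(x_i-x_1)\T\tethat_t$ concentrates around $(x_i-x_1)\T\theta_*$, whose value is controlled by the gap: since $\meanf$ is monotone and $\mu_1>\mu_i$, we have $x_1\T\theta_*>x_i\T\theta_*$, and the separation $x_1\T\theta_*-x_i\T\theta_*$ is lower-bounded using $\cmin$ via $\Delta_i=\meanf(x_1\T\theta_*)-\meanf(x_i\T\theta_*)\le \meanf'(\tilde\theta)\,(x_1\T\theta_*-x_i\T\theta_*)$-type mean-value reasoning, which gives $x_1\T\theta_*-x_i\T\theta_*\ge \Delta_i/\|\meanf'\|_\infty$ or, more usefully for the lower bound direction, a bound involving $\cmin$.

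Second, I would bound the estimation error $(\tethat_t-\theta_*)$ in the relevant directions. This is where the GLM-specific argument enters. Using the score equation $\sum_{j=1}^n\big(Y_{j,t}-\meanf(X_{j,t}\T\tethat_t)\big)X_{j,t}=0$ together with a first-order (mean-value) expansion of $\meanf$ around $\theta_*$, one writes $\meanf(X_{j,t}\T\tethat_t)-\meanf(X_{j,t}\T\theta_*)=\meanf'(\tilde\theta_j)\,X_{j,t}\T(\tethat_t-\theta_*)$ for some intermediate point $\tilde\theta_j$. Substituting the model $Y_{j,t}=\meanf(X_{j,t}\T\theta_*)+\epsilon_{j,t}$ lets me write a weighted normal-equation of the form $G_t(\tethat_t-\theta_*)=\sum_j \epsilon_{j,t}X_{j,t}$, where $G_t=\sum_j \meanf'(\tilde\theta_j)X_{j,t}X_{j,t}\T\succeq \cmin V_t$ by the lower bound $\cmin\le\meanf'$. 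The factor $\cmin$ therefore appears precisely when I invert this Gram-type matrix, which explains the $\cmin^2$ in the exponent and the reason the bound degrades as $\cmin\to 0$.

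Third, I would project onto each of the two directions $x_1$ and $x_i$ separately rather than onto $x_i-x_1$, which is what produces the \emph{sum of two} exponential terms in the statement (with $\|x_1\|^2_{V_t^{-1}}$ and $\|x_i\|^2_{V_t^{-1}}$) rather than the single $\|x_i-x_1\|^2_{V_t^{-1}}$ term of the linear lemma. Concretely, I would split the deviation event $(x_i-x_1)\T(\tethat_t-\theta_*) > x_1\T\theta_*-x_i\T\theta_*$ into the union of $\{x_i\T(\tethat_t-\theta_*) \text{ large}\}$ and $\{-x_1\T(\tethat_t-\theta_*)\text{ large}\}$, apply the union bound, and on each piece use that $x\T(\tethat_t-\theta_*)=x\T G_t^{-1}\sum_j\epsilon_{j,t}X_{j,t}$ is, conditionally on the design, a $\sigma^2$-sub-Gaussian linear combination of the noise with variance proxy $\sigma^2\,\|x\|^2_{G_t^{-1}}\le \sigma^2\cmin^{-2}\|x\|^2_{V_t^{-1}}$. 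A standard sub-Gaussian tail bound then yields each $\expp{-\Delta_i^2\sigma^{-2}\cmin^2/(8\|x\|_{V_t^{-1}}^2)}$ term. The main obstacle is handling the intermediate points $\tilde\theta_j$ rigorously: the matrix $G_t$ is \emph{random} and depends on $\tethat_t$ itself, so the clean identity $\tethat_t-\theta_*=G_t^{-1}\sum_j\epsilon_{j,t}X_{j,t}$ is not literally a linear-in-noise expression, and one must invoke the $\cmin$ assumption (that $\meanf'\ge\cmin$ uniformly over the convex hull of $\theta_*$ and $\tethat_t$, guaranteed by the forced-exploration step) to pass from $G_t$ to $\cmin V_t$ while keeping the sub-Gaussian concentration valid — this is the step I would treat most carefully.
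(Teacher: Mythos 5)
Your plan follows essentially the same route as the paper's proof: reduce $\{\muh_{i,t}>\muh_{1,t}\}$ to $\{x_i\T\tethat_t>x_1\T\tethat_t\}$ by monotonicity of $\meanf$, split this into two one-directional deviation events at threshold $\Delta_i/2$ via a union bound, use the score equation plus a mean-value expansion to write $\tethat_t-\theta_* = \La^{-1}\sum_j X_{j,t}\epsilon_{j,t}$ with $\La=\sum_j \meanf'(X_{j,t}\T\tilde\theta)X_{j,t}X_{j,t}\T \succeq \cmin V_t$ (the paper imports exactly this identity from Lemma~1 of \citet{Kveton-GLM-2019}), and finish with Hoeffding's inequality and the Gram-matrix comparison $x\T\La^{-1}V_t\La^{-1}x\leq \cmin^{-2}\vnorm{x}^2_{V_t^{-1}}$. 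Your remark that the variance proxy is $\sigma^2\vnorm{x}^2_{G_t^{-1}}$ is a cosmetic slip (it is $\sigma^2\, x\T G_t^{-1}V_tG_t^{-1}x$), but you bound it by the correct quantity, as does the paper.

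There is, however, one genuine problem: the gap-conversion step. You treat $\Delta_i$ as the reward gap $\mu_1-\mu_i=\meanf(x_1\T\theta_*)-\meanf(x_i\T\theta_*)$ and try to lower-bound the linear-predictor separation $(x_1-x_i)\T\theta_*$, suggesting this can be done with ``a bound involving $\cmin$.'' That direction is backwards: the mean value theorem with $\meanf'\geq\cmin$ gives $(x_1-x_i)\T\theta_*\leq \Delta_i/\cmin$, an \emph{upper} bound, which is useless here. A lower bound on the predictor gap requires an upper bound $\sup \meanf'$, and carrying that through would leave an extra factor $(\sup \meanf')^{-2}$ in the exponent, so your proof would not reproduce the lemma as stated unless $\sup \meanf'\leq 1$. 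The paper avoids this issue entirely by fiat: in its GLM proof it declares ``we use $\Delta_i$ for the gap before the mean function transformation,'' i.e., $\Delta_i:=(x_1-x_i)\T\theta_*$, so no conversion between the two gaps is ever performed. If you adopt that same reading of $\Delta_i$, your argument closes and coincides with the paper's; if $\Delta_i$ is the reward gap, the step as you sketched it fails.
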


We prove this lemma using the assumption on ${h}'$ and Hoeffding's inequality. 

\begin{proof}[Proof of \cref{lem:prob-elim-best-GLM}] 
Since $t$ is fixed we drop it in the rest of this proof. Let 
\begin{align*}
    \La = \sum_{j = 1}^n {\meanf}'(X_j\T \tilde{\theta}) X_j X_j\T\;,
\end{align*}
where $\tilde{\theta}$ is some convex combination of $\theta_*$ and $\tethat$ then
\begin{align*}
    &\prob{x_i\T \tethat > x_1\T \tethat}
\\&= \prob{x_i\T \tethat - x_i\T \theta_* - \frac{\Delta_i}{2} > x_1\T \tethat - x_1\T \theta_* + \frac{\Delta_i}{2}} \\
& \leq \prob{x_i\T \tethat - x_i\T \theta_* - \frac{\Delta_i}{2} > 0} \\&\qquad\qquad + \prob{x_1\T \tethat - x_1\T \theta_* + \frac{\Delta_i}{2} < 0} \\
& = \prob{x_i\T (\tethat - \theta_*) > \frac{\Delta_i}{2}} + \prob{x_1\T (\theta_* - \tethat) > \frac{\Delta_i}{2}} \\
& = \prob{x_i\T \La^{-1} \sum_{j = 1}^n X_j \epsilon_j > \frac{\Delta_i}{2}} \\&\qquad\qquad + \prob{- x_1\T \La^{-1} \sum_{j = 1}^n X_j \epsilon_j > \frac{\Delta_i}{2}}\;.
\end{align*}
Note that we use $\Delta_i$ for the gap before the mean function transformation. In the last inequality, we used Lemma 1 in \citep{Kveton-GLM-2019}. Now if we define
\begin{align*}
    W_i=(W_{i,1},\dots,W_{i,n})&=x_i\T \La^{-1}\bigg(X_{1}, \cdots, X_{n}\bigg)\in \R^n\\\Rightarrow \vnorm{W_i}_2^2&=W_iW_i\T=x_i\T \La^{-1}V\La^{-1}x_i\;,
\end{align*}
then by Hoeffding’s inequality, we get 
\begin{align*}
    \prob{x_i\T \La^{-1} \sum_{j = 1}^n X_j \epsilon_j > \frac{\Delta_i}{2}}&=\prob{ \sum_{j = 1}^n W_{i,j} \epsilon_j > \frac{\Delta_i}{2}}\\&\leq \expp{\frac{-\Delta_i^2}{8\sigma^2\vnorm{W_i}_2^2} }\;,\\  
    \prob{- x_1\T \La^{-1} \sum_{j = 1}^n X_j \epsilon_j > \frac{\Delta_i}{2}}&=\prob{ -\sum_{j = 1}^n W_{1,j} \epsilon_j > \frac{\Delta_i}{2}}\\&\leq \expp{\frac{-\Delta_i^2}{8\sigma^2\vnorm{W_1}_2^2} }\;.
\end{align*}

Since $\tilde{\theta}$ is not known in the process, we need to dig deeper. By assumption, we know $\cmin\leq {\meanf}'(x_i\T \tilde{\theta})$ for some $\cmin\in\R^+$ and for all $i\in\A$, therefore $\cmin^{-1}V^{-1}\succeq \La^{-1}$ by definition of $\La$, and
\begin{align*}
    \vnorm{W_i}_2^2&=x_i\T \La^{-1}V\La^{-1}x_i\\&\preceq x_i\T \cmin^{-1}V^{-1}V\cmin^{-1}V^{-1}x_i\\&=\cmin^{-2} \vnorm{x_i}_{V^{-1}}^2\;,
\end{align*}
so
\begin{align*}
    &\prob{x_i\T \tethat > x_1\T \tethat}\\&\leq \expp{\frac{-\Delta_i^2\cmin^2}{8\sigma^2\vnorm{x_i}_{V^{-1}}^2} }+\expp{\frac{-\Delta_i^2\cmin^2}{8\sigma^2\vnorm{x_1}_{V^{-1}}^2}}\;.
\end{align*}
\end{proof}

Next, we bound the error probability at each stage in \cref{lem:prob-err-one-stage-OPT-GLM-G}.

\begin{lemma}\label{lem:prob-err-one-stage-OPT-GLM-G}
In \algGSE with the GLM of \cref{eq:GLMMuh}, the probability that the optimal arm is eliminated in stage $t$ satisfies
    \begin{align*}
         \prob{\Etil_t}\leq  2\eta\expp{\frac{-\dmint^2\;\sigma^{-2}\;\cmin^2}{8\;\underset{i\in\A_t}{\max}\;\vnorm{x_i}^2_{V_t^{-1}}}}\,.
    \end{align*}
\end{lemma}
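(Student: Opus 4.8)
The plan is to reduce the stage-$t$ elimination event for the optimal arm to a counting event and then apply Markov's inequality, mirroring the linear-model argument behind \cref{lem:prob-err-one-stage-OPT-G}. First I would fix the conditioning: under $\bar E_1,\dots,\bar E_{t-1}$ the optimal arm has survived, so $1\in\A_t$, and the surviving set $\A_t$ together with the allocation $\Pi_t$ is measurable with respect to the first $t-1$ stages. Since stage $t$ draws its own independent samples $X_t,Y_t$, the pairwise bound of \cref{lem:prob-elim-best-GLM} applies conditionally on $\A_t$. Recall that \algGSE keeps the top $1/\eta$ fraction, i.e.\ it retains $m:=\lceil|\A_t|/\eta\rceil$ arms ranked by $\muh_{i,t}$. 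Hence $\Etil_t$ occurs exactly when arm $1$ falls outside this top block, which is equivalent to at least $m$ of the other arms having a strictly larger estimate than arm $1$.

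Next I would encode this count explicitly. Define the nonnegative integer random variable
$$Z=\sum_{i\in\A_t\setminus\{1\}}\mathbf{1}\{\muh_{i,t}>\muh_{1,t}\},$$
so that $\Etil_t=\{Z\ge m\}$. Markov's inequality then gives $\prob{\Etil_t}=\prob{Z\ge m}\le \E[Z]/m$, and by linearity of expectation $\E[Z]=\sum_{i\in\A_t\setminus\{1\}}\prob{\muh_{i,t}>\muh_{1,t}}$.

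To bound each summand I would invoke \cref{lem:prob-elim-best-GLM} and then uniformize the exponents. For every surviving suboptimal arm we have $\Delta_i\ge\dmint$ by the definition of the minimum surviving gap, and $\vnorm{x_i}^2_{V_t^{-1}}\le\max_{j\in\A_t}\vnorm{x_j}^2_{V_t^{-1}}$ (the same bound applies to $\vnorm{x_1}^2_{V_t^{-1}}$). Since each exponent is increasing in the gap and decreasing in the norm, both exponential terms in \cref{lem:prob-elim-best-GLM} are dominated by the single worst-case term, giving
$$\prob{\muh_{i,t}>\muh_{1,t}}\le 2\expp{\frac{-\dmint^2\sigma^{-2}\cmin^2}{8\max_{j\in\A_t}\vnorm{x_j}^2_{V_t^{-1}}}}.$$
Summing over the $|\A_t|-1$ suboptimal arms and dividing by $m\ge|\A_t|/\eta$ turns the prefactor $(|\A_t|-1)/m$ into at most $\eta$, which yields the claimed factor $2\eta$ and completes the bound.

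None of the steps is technically deep; the argument is essentially combinatorial bookkeeping once \cref{lem:prob-elim-best-GLM} is in hand. The main point requiring care is the first one: verifying that the pairwise bound may be applied under the conditioning on $\bar E_1,\dots,\bar E_{t-1}$, i.e.\ that the stage-$t$ estimates are built from fresh, independent samples so that the concentration is unaffected by which arms were eliminated earlier. The only other subtlety is the ceiling in $m=\lceil|\A_t|/\eta\rceil$, which must be handled so that the counting prefactor collapses cleanly to $\eta$ rather than something larger.
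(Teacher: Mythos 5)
Your proof is correct and follows essentially the same route as the paper's: count the arms whose estimated mean exceeds that of the optimal arm, bound the expected count via \cref{lem:prob-elim-best-GLM} after uniformizing the gaps to $\dmint$ and both norms to $\max_{i\in\A_t}\vnorm{x_i}^2_{V_t^{-1}}$, and apply Markov's inequality at the retention threshold to produce the $2\eta$ prefactor. Your handling of the conditioning on $\bar{E}_1,\dots,\bar{E}_{t-1}$ and of the ceiling in $\lceil|\A_t|/\eta\rceil$ is somewhat more explicit than the paper's (which writes the threshold loosely as $|\A_t|/\eta$), but this is a refinement of the same argument, not a different one.
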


\begin{proof}[Proof of \cref{lem:prob-err-one-stage-OPT-GLM-G}]
By \cref{lem:prob-elim-best-GLM} we have
\begin{align*}
    \E(N_t)&=\sum_{i\in \A_t}\prob{\muh_{i,t}>\muh_{1,t}}
    \\&\leq \sum_{i\in \A_t} \expp{\frac{-\Delta_i^2\cmin^2}{8\sigma^2\vnorm{x_i}_{V_t^{-1}}^2} }+\expp{\frac{-\Delta_i^2\cmin^2}{8\sigma^2\vnorm{x_1}_{V_t^{-1}}^2} }\\
    &\leq 2|\A_t|\expp{\frac{-\dmint^2\cmin^2}{8\sigma^2\max_{i\in\A_t}\vnorm{x_i}^2_{V_t^{-1}}}}\;.
\end{align*}
Now Markov inequality gives
\begin{align*}
\prob{\Etil_t}&=\prob{N_t>1/\eta|\A_t|}
\\
&\leq 2\eta\expp{\frac{-\dmint^2\cmin^2}{8\sigma^2\max_{i\in\A_t}\vnorm{x_i}^2_{V_t^{-1}}}}\;.
\end{align*}
\end{proof}

Finally, we bound the probability of error and conclude the proof of \cref{thm:SE-opt-G-GLM} by using this result together with a union bound and the Kiefer-Wolfowitz Theorem.

\begin{proof}[Proof of \cref{thm:SE-opt-G-GLM}]
By \cref{lem:prob-err-one-stage-OPT-GLM-G} we know that the optimal arm is eliminated in one of the $s=\lgk$ stages with a probability that satisfies
\begin{align*}
    \delta &\leq \sum_{t=1}^{s} 2\eta\expp{\frac{-\dmint^2\cmin^2}{8\sigma^2\max_{i\in\A_t}\vnorm{x_i}^2_{V_t^{-1}}}}\\
    &\leq \sum_{t=1}^{s} 2\eta\expp{\frac{-\dmin^2\cmin^2}{8\sigma^2\max_{i\in\A_t}\vnorm{x_i}^2_{V_t^{-1}}}}\\
    &\leq 2\eta\lgk\expp{\frac{-\dmin^{2}\cmin^2}{8\sigma^2 \max_{i\in\A_t,t\in[s]}\vnorm{x_i}^2_{V_t^{-1}}}}\;.
\end{align*}
Now by the Kiefer-Wolfowitz Theorem in \citep{kiefer_wolfowitz_1960}, we know $g_t(\pi^*)=\max_{i\in\A_t,t\in[s]}\vnorm{x_i}^2_{V_t^{-1}}=d_t/n\leq\frac{d\lgk}{B}$ under the G-optimal or D-optimal design, so with this design we have 
\begin{align*}
\delta&\leq 2\eta\lgk\expp{\frac{-\dmin^{2}\cmin^2}{8\sigma^2\max_{i\in\A_t,t\in[s]}\vnorm{x_i}^2_{V_t^{-1}}}}\\&\leq 2\eta\lgk\expp{\frac{-B\dmin^{2}\cmin^2}{8\sigma^2 d\lgk}}\;.
\end{align*}

\end{proof}

\section{Frank Wolfe G-optimal Design}\label{app:FWGopt}

In this section, we develop more details for our FW G-optimal algorithm.
Let $\pi:\A_t\rightarrow[0,1]$ be a distribution on $\A_t$, so $\sum_{i\in\A_t}\pi(i)=1$. For instance, based on \citet{kiefer_wolfowitz_1960} (or  Theorem 21.1 (Kiefer–Wolfowitz) and equation 21.1 from \citealt{lattimore-Bandit}) we should sample arm $i$ in stage $t$, $w_i$ times, in which 
\[w_i=\bigg\lceil \frac{\pi(i)g_t(\pi)}{\varepsilon^2}\log(1/\delta)\bigg\rceil\;,\]
where $g_t(\pi)=\max_{i\in\A}\vnorm{x_i}_{V_t^{-1}}$ and we know that $g_t(\pi^*)=d_t$ by the same Theorem. Finding $\pi^*$ is a convex problem for finite number of arms and can be solved using Frank-Wolfe algorithm (read note 3 from section 21 in \citealt{lattimore-Bandit}). After we get the optimal design $\pi^*$, we can get the optimal allocation using a randomized rounding. There are algorithms that avoid randomized rounding by starting with an allocation problem (see \cref{sec:relatedWorks}). We develop yet another efficient algorithm for the optimal design in \cref{sec:Gopt}.

Khachiyan in \citet{Khachiyan-1996} showed that if we run the FW algorithm for $O(d\log\log(K+d))$ iterations, we get $g_t(\hat{\pi})\leq d_t$, where $\hat{\pi}$ is the FW solution. More precisely, we get the following error bounds as a corollary.
\begin{cor}\label{cor:FW-Kh}
    If we use \algGSE with \algFW for Exploration, for $N=O(d\log\log(K+d))$ iterations, then
    \[\delta\leq 2\eta\lgk \expp{\frac{-B\dmin^{2}}{4d\sigma^2 \lgk} }\;.\]
\end{cor}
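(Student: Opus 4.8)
The plan is to obtain \cref{cor:FW-Kh} as an immediate consequence of \cref{thm:SE-opt-G} combined with the approximation guarantee of the Frank-Wolfe procedure. The crucial observation is that the stochastic bound of \cref{thm:SE-opt-G}, and in particular the intermediate form
$\delta \leq 2\eta\lgk\expp{\frac{-\dmin^2}{4\sigma^2\max_{i\in\A_t,t\in[s]}\vnorm{x_i}^2_{V_t^{-1}}}}$
derived in its proof, holds for \emph{any} valid allocation $\Pi_t$; the only place where the choice of $\Pi_t$ enters is through the value of $\max_{i\in\A_t,t\in[s]}\vnorm{x_i}^2_{V_t^{-1}}$. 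Hence it suffices to show that the allocation returned by \algFW (\cref{alg:FWGopt}) drives this quantity down to essentially the same value $d_t/n \leq d\lgk/B$ that the exact G-optimal design achieves through Kiefer-Wolfowitz.

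First I would check that the output of \algFW is a valid allocation. After the projection step the arms in $\A_t$ span $\R^{d_t}$, so any design with full support yields an invertible $V_t$; since the Frank-Wolfe iterates start from the uniform design and the $\mathrm{ROUND}$ procedure of \citet{allenzhu2017nearoptimal} preserves support on the spanning set, $V_t$ stays invertible and $\vnorm{x_i}^2_{V_t^{-1}}$ is well defined. Next I would invoke Khachiyan's result \citep{Khachiyan-1996}: running the Frank-Wolfe iteration for $N = O(d\log\log(K+d))$ steps yields a continuous design $\pi_N$ with $g_t(\pi_N) = \max_{i\in\A_t}\vnorm{x_i}^2_{V_t^{-1}} \leq d_t$, i.e.\ matching the Kiefer-Wolfowitz optimum up to the stated lower-order factor.

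I would then transfer this guarantee from the fractional design $\pi_N$ to the integral allocation $\Pi_t = \mathrm{ROUND}(n,\pi_N)$ via the rounding guarantee of \citet{allenzhu2017nearoptimal}, which ensures that for $n$ large enough the covariance matrix built from the rounded allocation is within a $(1+o(1))$ factor of $n\sum_i \pi_N(i) x_i x_i\T$; this keeps $\max_{i\in\A_t}\vnorm{x_i}^2_{V_t^{-1}} \leq d_t/n$ up to the same factor. Taking the maximum over the $s = \lgk$ stages and using $n = \lfloor B/s\rfloor$, so that $d_t/n \leq d\lgk/B$, I would substitute this back into the general estimate above, which reproduces \eqref{eq:G-opt-bnd} verbatim and concludes the proof.

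The main obstacle will be the passage from the continuous Frank-Wolfe design to the discrete allocation actually used by the algorithm: Khachiyan's bound controls $g_t(\pi_N)$ for the fractional design, whereas the error bound of \cref{thm:SE-opt-G} is stated for the covariance $V_t$ formed from the rounded allocation $\Pi_t$. Making this rigorous requires the rounding guarantee of \citet{allenzhu2017nearoptimal} together with a mild lower bound on the per-stage budget $n$, so that the $(1+\epsilon)$ rounding loss is absorbed into the $O$-notation and does not inflate the constant in the exponent. Every remaining step is a direct substitution into the already-established bound of \cref{thm:SE-opt-G}.
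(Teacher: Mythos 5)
Your proposal is correct and follows essentially the same route as the paper: invoke Khachiyan's guarantee that $N=O(d\log\log(K+d))$ Frank--Wolfe iterations achieve $g_t(\hat{\pi})\leq d_t/n\leq \frac{d\lgk}{B}$, and substitute this into the stage-wise bound $\delta\leq 2\eta\lgk\expp{\frac{-\dmin^{2}}{4\sigma^2\max_{i\in\A_t,t\in[s]}\vnorm{x_i}^2_{V_t^{-1}}}}$ from the proof of \cref{thm:SE-opt-G}. In fact you are somewhat more careful than the paper's one-line proof, which silently identifies the fractional design with the rounded allocation $\Pi_t=\mathrm{ROUND}(n,\pi_N)$; your explicit appeal to the rounding guarantee of \citet{allenzhu2017nearoptimal} fills a step the paper leaves implicit.
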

\begin{proof}
     As in \cref{eq:Gopt-bd} we just use $g(\hat{\pi})\leq d_t/n\leq \frac{d\log_{\eta}K}{B}$, where $\hat{\pi}$ is derived according to the algorithm in \citet{Khachiyan-1996}.
\end{proof}

Also, \citet{Kumar-Yildirim-2005} suggested an initialization of the FW algorithm which achieves a bound independent of the number of arms. In particular, we get the following corollary.
\begin{cor}
    If we use the \algGSE\ using FW algorithm to find a G-optimal design with $N=O(d\log\log(d))$ iterations starting from the initialization advised in \citet{Kumar-Yildirim-2005}, then the accuracy is lower bounded as below;
    \[\delta\leq 2\eta\lgk\expp{\frac{-B\dmin^{2}}{8d\sigma^2 \lgk } }\;.\]
\end{cor}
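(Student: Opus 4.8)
The plan is to mirror the proof of \cref{cor:FW-Kh} almost verbatim, changing only the optimization oracle that supplies the G-optimal value: in place of Khachiyan's guarantee I will substitute the Kumar--Yildirim guarantee, whose approximation factor is a constant (namely $2$) rather than essentially $1$, but whose iteration count is independent of the number of arms $K$. Recall that the proof of \cref{thm:SE-opt-G} establishes, before invoking any particular design, the generic bound
\[
  \delta \leq 2\eta\lgk \expp{\frac{-\dmin^{2}}{4\sigma^2 \,\max_{i\in\A_t,\,t\in[s]}\vnorm{x_i}^2_{V_t^{-1}}}},
\]
and that plugging in the exact Kiefer--Wolfowitz optimum $g_t(\pi^*)=d_t\leq d$ yields \eqref{eq:Gopt-bd} with the factor $4d$ in the exponent.

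First I would invoke \citet{Kumar-Yildirim-2005}: starting \algFW\ from their minimum-enclosing-ellipsoid-based initialization, $N=O(d\log\log d)$ Frank--Wolfe iterations produce a design $\hat\pi$ with $g_t(\hat\pi)\leq 2d_t$, and---crucially---the iteration count no longer carries the $\log\log(K+d)$ dependence present in \cref{cor:FW-Kh}. The extra factor of $2$, relative to the exact optimum $d_t$, is precisely the price one pays for making the iteration complexity independent of $K$.

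Next I would substitute $\max_{i\in\A_t}\vnorm{x_i}^2_{V_t^{-1}} = g_t(\hat\pi) \leq 2d_t/n \leq \tfrac{2d\lgk}{B}$, using $d_t\leq d$, $n=\lfloor B/s\rfloor$ and $s=\lgk$, into the generic bound above. This gives
\[
  \delta \leq 2\eta\lgk\expp{\frac{-B\dmin^{2}}{8d\sigma^2\lgk}},
\]
the claimed inequality; the denominator doubles from $4d$ to $8d$ exactly because of the factor-$2$ approximation.

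The main obstacle is to state the Kumar--Yildirim guarantee in the form needed here---namely a $2$-approximation of the G-optimal value $g_t(\pi^*)=d_t$ after $O(d\log\log d)$ iterations, with the bound holding uniformly across all $s$ stages. Since \algGSE\ solves an independent G-optimal subproblem on the projected arm set $\A_t\subseteq\R^{d_t}$ in each stage, the same per-stage guarantee applies after replacing $d$ by $d_t\leq d$, and taking the worst case over $t\in[s]$ supplies the uniform bound that the preceding union-bound argument requires. No new concentration estimates are needed; the corollary is a pure substitution of a looser-but-faster design oracle into the already-completed error decomposition.
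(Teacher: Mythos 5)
Your proposal is correct and follows essentially the same route as the paper: the paper's proof likewise plugs the $2$-approximation guarantee $g_t(\hat{\pi})\leq 2d_t/n$ for the Kumar--Yildirim-initialized Frank--Wolfe iterate (which the paper attributes to Theorem 2.3 of \citet{Ahipasaoglu2008-FWDoptimal}, or note 21.2 of \citet{lattimore-Bandit}) into the generic bound from the proof of \cref{thm:SE-opt-G}, exactly as in \cref{cor:FW-Kh} but with the factor $4d$ becoming $8d$. The only difference is a citation detail---you credit the $2$-approximation directly to \citet{Kumar-Yildirim-2005} rather than to the follow-up convergence analysis---which does not change the argument.
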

\begin{proof}
    By Theorem 2.3 in \citet{Ahipasaoglu2008-FWDoptimal} or note 21.2 in \citet{lattimore-Bandit} we get $g(\hat{\pi})\leq 2d/n$. The rest is same as in \cref{cor:FW-Kh}.
\end{proof}

The same sort of Corollaries hold for GLM as well by starting from \cref{thm:SE-opt-G-GLM}.

It is worth noting that based on the connection between D-optimal and G-optimal through the Kiefer-Wolfowitz Theorem \citep{kiefer_wolfowitz_1960}, we can also look at the FW algorithm for a D-optimal allocation. In D-optimal design, we seek to minimize $\det(V_t^{-1})$ which is equal to minimizing $h_t(\pi)=-\det(V_t)$
and we get
\begin{align*}
    \nabla_{\pi_i} h_t(\pi)&=
    -\det(V_t)\tr\bigg(V_t^{-1}\frac{\partial V_t}{\partial \pi_i}\bigg)\\
    &=-\det(V_t)\tr\bigg(V_t^{-1}x_ix_i\T\bigg)\;.
\end{align*}
We can use this in \algFW algorithm to implement the D-optimal allocation. The experimental results show similar performance using both G and D optimal allocation.

\section{Error Bound Comparison}\label{app:comparison}

In this section, we compare our analytical bounds to those in the related works. We try to derive similar bounds from their performance guarantee so that it is comparable to ours.

\subsection{BayesGap}\label{app:BGbound}
Based on Theorem 1 in \citet{pmlr-v33-hoffman14} we know if $\theta_*\sim\mathcal{N}(0,\eta_{b}^2I)$ then \algBG simple regret has the following bound for any $\epsilon>0$
\begin{align*}
    \prob{\mu_1-\mu_{\zeta}\geq \epsilon}\leq KB\expp{-\big(\frac{B-K}{\sigma^2}+\frac{\kappa_{b}}{\eta_{b}^2}\big)/8H_\epsilon},
\end{align*}
where $\kappa_{b}=\sum_{i\in\A}\vnorm{x_i}^{-2}$ and $H_\epsilon=\sum_{i\in\A}\big(\max(0.5(\Delta_i+\epsilon),\epsilon)\big)^{-2}$. We use the following assumptions to transform the bounds in way comparable to our bounds. Namely we use $\epsilon=0^+,\ \eta_b^2=1,\ \sigma^2=1$ where $0^+$ is a very small positive number and for simplicity we assume $\max(0.5(\Delta_i+0^+),0^+)=0.5\Delta_i$. We know that $\sum_{i\in\A}\Delta_i^{-2}\leq K\dmin^{-2},\ \kappa_b\geq KL^{-2}=K$ where $L=1$.
In this setting, the error bound satisfy the following 
\begin{align}
    \prob{\mu_1-\mu_{\zeta}> 0}
    &\simeq \prob{\mu_1-\mu_{\zeta}> 0^+ }\nonumber
    \\&\leq KB\expp{\frac{-B\dmin^{2}}{32K} }\label{eq:bd-BG}\;.
\end{align} 
This bound is comparable to our bounds in \cref{thm:SE-opt-G}.
With the same assumptions, the error (regret) bound of \algGSE with G-optimal design is
\begin{align}
    \prob{\mu_1-\mu_{\zeta}\geq 0}\leq 2\eta\lgk\expp{\frac{-B\dmin^{2}}{4d\lgk } }\label{eq:bd-SE}\;.
\end{align}

Comparing \cref{eq:bd-BG} and \cref{eq:bd-SE} we can see the improvements that \algGSE has over \algBG. In terms of $K$, we improve the linear dependence to log factors while in $B$, we improve by eliminating the linear factor to the constant outside of the exponent.

In summary, We know for \algBG, $\prob{\mu_1\geq \mu_{\zeta}}\leq KB\expp{\frac{-B\dmin^{2}}{32K} }$ while \algGSE satisfies $\prob{\mu_1\geq\mu_{\zeta}}\leq 2\eta\lgk\expp{\frac{-B\dmin^{2}}{4d\lgk } }$. The improvements in its dependence on $K$ and $B$ are obvious.

\subsection{Peace}\label{app:Peace}

We compare our error bound
\cref{eq:Lin-Bd} (or \cref{eq:G-opt-bnd}) to the performance guarantee of FB Peace in Theorem 7 of \citet{katzsamuels2020empirical-peace}. We consider different cases and claims as follows, however, it is not easy to compare our bounds to \algPc bounds since their bounds are defined using quite different approaches.

\textbf{(i)} If we accept few claims in the Peace paper (see below), their error bound is as follows:
\begin{align*}
    2\lceil\log(d)\rceil\expp{\frac{-B\dmin^2}{c\max_{i\in\A}\vnorm{x_i-x_1}_{V^{-1}}\log(d)}}\;.
\end{align*}
which is better than our error bound \cref{eq:Lin-Bd} if \boxed{\text{$K> \exp(\exp(\log(d)\log\log(d)))$}}, and worse otherwise.

\begin{proof}
    The error bound for \algPc in Theorem 7 of \citet{katzsamuels2020empirical-peace} is as follows; 
    \begin{align}\label{eq:peaceBound}
        \prob{\zeta\neq 1}\leq 2\lceil\log(\gamma(\X))\rceil\expp{\frac{-B}{c'(\rho^*+\gamma^*)\log(\gamma(\X))}}\;.
    \end{align}
    for $B\geq c\max\left((\rho^*+\gamma^*),d\right)\log(\gamma(\X))$ where $\X$ is the set of all arms $\{x_i\}_{i\in\A}$, $c'$ is a constant and
    \begin{align*}
        \gamma(X) :=\min_{\lambda\in\Delta} \E_{\xi\sim N(0,I_d)}\left(\underset{x,x'\in X}{\sup}(x-x')\T V^{-1/2}(\lambda)\xi\right)\;.
    \end{align*}
    with $\Delta$ being a probability simplex and  $X\subset\X$ is any subset of arms, $V(\lambda)=\sum_{i\in\A}\lambda_ix_ix_i\T$ and $V^{-1/2}=(V^{-1})^{1/2}$ i.e. $V^{-1/2}V^{-1/2}=V^{-1}$. Also
    \begin{align}
        \rho^* := \underset{\lambda\in\Delta}{\inf}\rho^*(\lambda)\nonumber\;,
    \end{align}
    where
    \begin{align} \rho^*:=\underset{x\in\X\backslash\{x_1\}}{\sup}\frac{\vnorm{x_1-x}^2_{V(\lambda)^{-1}}}{(\theta_*\T(x_1-x))^2}\label{eq:defRho}\;,
    \end{align}
    \begin{align*}
       \gamma^{*}&:=\inf_{\lambda \in \Delta} \gamma^{*}(\lambda)\;,
    \end{align*}
    where
    \begin{align*}
       \quad \gamma^{*}(\lambda)&:=\underset{\xi \sim N(0, I_d)}{\E}\left(\sup _{z \in \mathcal{Z} \backslash\left\{z_{*}\right\}} \frac{\left(z_{*}-z\right)^{\top} V(\lambda)^{-1 / 2} \xi}{\theta^{\top}\left(z_{*}-z\right)}\right)^{2}\;.
    \end{align*}
    
    By Proposition 1 in \citet{katzsamuels2020empirical-peace}, we know $\gamma^*\geq c\rho^*$ since $\inf_{x\neq x_1}\inf_{\lambda\in\Delta}\frac{\vnorm{x_1-x}_{V(\lambda)^{-1}}}{(\theta_*\T(x_1-x))^2}\ll\rho^*$. This claim is not proved in \citet{katzsamuels2020empirical-peace} though. As such we can give advantage to \algPc and assume the bound \cref{eq:peaceBound} is rather
    \begin{align}
        &\prob{\zeta\neq 1}\nonumber
        \\&\leq 2\lceil\log(\gamma(\X))\rceil\expp{\frac{-B}{c\rho^*\log(\gamma(\X))}}\nonumber\\
        &\leq 2\lceil\log(\gamma(\X))\rceil\expp{\frac{-B\dmin^2(\log(\gamma(\X)))^{-1}}{c\max_{i\in\A}\vnorm{x_i-x_1}_{V^{-1}}}}\nonumber\;.
    \end{align}
    where we used \cref{eq:defRho} in the last inequality. By the claim after Theorem 7 in \citet{katzsamuels2020empirical-peace} $\log(\gamma(\X))=O(\log(d))$ for linear bandits. This claim is not proved in \citet{katzsamuels2020empirical-peace} neither. Now the bound is
    \begin{align*}
        2\lceil\log(d)\rceil\expp{\frac{-B\dmin^2}{c\max_{i\in\A}\vnorm{x_i-x_1}_{V^{-1}}\log(d)}}\;.
    \end{align*}
    Bringing all the elements to the exponent, this is of order 
    \begin{align}
        \expp{\frac{-B\dmin^2}{c\max_{i\in\A}\vnorm{x_i-x_1}_{V^{-1}}\log(d)\log\log(d)}}\label{eq:Pc-1}\;,
    \end{align}
    while \cref{eq:Lin-Bd} is
    \begin{align}
        \expp{\frac{-B\dmin^2}{c\max_{i\in\A}\vnorm{x_i-x_1}_{V^{-1}}\log\log(K)}}\label{eq:GSE-1}\;,
    \end{align}
    as such the comparison boils down to comparing 
    \[\boxed{\text{$\log\log(K)~ \text{for ours vs} ~\log(d)\log\log(d)~ \text{for theirs}$}}\].
    Therefore, \cref{eq:Pc-1} is better than \cref{eq:GSE-1} if $K> \exp(\exp(\log(d)\log\log(d)))$ and worse otherwise.
\end{proof}

\textbf{(ii)} We can also show that their bound is 
\begin{align*}
 2\lceil\log(d)\rceil\expp{\frac{-B\dmin^2}{c'd\log(K)\log(d)}}\;,
\end{align*}
under the G-optimal design, which is \underline{worse than our error bound \cref{eq:G-opt-bnd}}.

\begin{proof}
    On the other hand, by definition of $\gamma$ we know 
    \begin{align}
        \gamma^*&\leq \inf_{\lambda \in \Delta} \underset{\xi \sim N(0, I)}{\E}\frac{\sup _{x \in \X \backslash\left\{x_1\right\}} \left(x_1-x\right)^{\top} V(\lambda)^{-1 / 2} \xi}{\sup_{x \in \X}\left(\theta^{\top}\left(x_1-x\right)\right)^2}\nonumber\\
        &=\gamma(\X)/\dmin^2 \label{eq:gamma} \;.
    \end{align}
    Also by Proposition 1 in \citet{katzsamuels2020empirical-peace} $\rho^*\leq \rho^*\log(K)$ so we can rewrite \cref{eq:peaceBound} as
    \begin{align*}
        \prob{\zeta\neq 1}&\leq 2\lceil\log(\gamma(\X))\rceil
        \\&\quad\expp{\frac{-B}{c'\rho^*(1+\log(K))\log(\gamma(\X))}}\\
        &\leq 2\lceil\log(d)\rceil
        \\&\quad\expp{\frac{-B\dmin^2}{c'\max_{i\in\A}\vnorm{x_i-x_1}_{V^{-1}}\log(K)\log(d)}}\;,
    \end{align*}
    which under G-optimal design is
    \begin{align*}
     2\lceil\log(d)\rceil\expp{\frac{-B\dmin^2}{c'd\log(K)\log(d)}}\;.
    \end{align*}
    Now we can compare this with \cref{eq:G-opt-bnd} and notice the extra $\log(d)$ in the exponent of \algPc bound. Again this can be written orderwise as
    \begin{align*}
     \expp{\frac{-B\dmin^2}{c'd\log(K)\log(d)\log(\log(d))}}\;.
    \end{align*}
    which by \cref{eq:GSE-1} the comparison simplifies to
    \[\boxed{\text{$\log\log(K)~ \text{for ours vs} ~d\log(K)\log(d)\log(\log(d))~ \text{for theirs}$}}\]
    which is always in the favor of our algorithm.
\end{proof}

\begin{remark}[Budget Requirement of \algPc]\label{rem:PeaceB}
    Similar to above, using Proposition 1 and the claims in \citet{katzsamuels2020empirical-peace}, for the lower bound required for \algPc budget we have
\begin{align*}
    B&\geq c \max\left(\left[\rho^{*}+\gamma^{*}\right], d\right) \log(\gamma(\mathcal{Z}))\\
    &\gtrsim c\left[\rho^*+\gamma^*\right]\log(d)\\
    &\geq c \rho^*\log(d)\\
    &=c\log(d)\frac{\max_{i\in\A}\vnorm{x_i-x_1}^2_{V^{-1}}}{\dmin^2}\;,
\end{align*}
wherein the second inequality we used the claim that $\log(\gamma(\mathcal{Z}))=O(\log(d))$ so we used $\gtrsim$ to account for that.
\end{remark}

\begin{remark}[Implementational Issues of the Approximation for Peace]\label{rem:PeaceDiffc}
    We note that the ``Computationally Efficient Algorithm for Combinatorial Bandits'' in the Peace paper is proposed for the FC setting and it is not easy to derive it for the FB setting. Nonetheless, it still has some computational issues. To name a few, consider the calculating the gradient $\nabla_{\lambda} g(\lambda;\eta;\bar{z})$ in estimateGradient subroutine, Algorithm 8 which could be cumbersome as it is done $B$ times. Also the maximization in subroutine, $\argmax_{z\in\mathcal{Z}}g(\lambda;\eta;\text{MAX-VAL};z)$ could be hard to track since the geometric properties of $\mathcal{Z}$ come into play. 
\end{remark}

\subsection{OD-LinBAI}\label{app:OD}
Here we compare the error upper bounds of our algorithm with OD-LinBAI. We show that their bound could simplify to
$$
    \Pr\left({\zeta\neq 1}\right)\leq\left(\frac{4K}{d}+3\log(d)\right) 
    \exp\left({\frac{(d^2-B)\Delta_{\min}^2}{32d\log(d)}}\right) .
$$
Now assume $K=d^q$ for some $q\in\R$, if we divide our bound \cref{eq:G-opt-bnd} with theirs we get
\begin{align}
    O\left(
   \frac{q\log(d)}{d^{q-1}+\log(d)}\exp\left({\frac{-d^2\Delta_{\min}^2}{d\log(d)}}\right)\right)\label{eq:GSE/OD}\;.
\end{align}

which is less than 1 and in this case \textbf{our error bound is tighter}. Nevertheless, in the case of $K<d(d+1)/2$, their bound is tighter.

\begin{proof}
    Our error bound for the linear case is (with $\eta=2$ and $\sigma^2=1$)
    $$
        \Pr\left({\zeta\neq 1}\right)\leq 4\log(K)\exp\left({\frac{-B\Delta_{\min}^{2}}{4d\log(K) }}\right)\;,
    $$
    while the error bound of OD-LinBAI in theorem 2 is 
    $$
    P(\zeta\neq 1)\leq \left(\frac{4K}{d}+3\log(d)\right)\exp\left(\frac{-m}{32H_{2,\text{lin}}}\right)\;,
    $$
    where $
    H_{2,\text{lin}}=\max _{2 \leq i \leq d} \frac{i}{\Delta_{i}^{2}}$ and
    \begin{align*}
        m&=\frac{B-\min \left(K, \frac{d(d+1)}{2}\right)-\sum_{r=1}^{\left\lceil\log(d)\right\rceil-1}\left[\frac{d}{2^{r}}\right\rceil}{\left\lceil\log(d)\right\rceil}
    \\\\
        &\leq
        \begin{cases}
        \frac{B-d(d+1)/2}{\log(d)}\leq \frac{B-d^2}{\log(d)} \quad & K\geq d(d+1)/2\\\\
        \frac{B-K}{\log(d)} \quad & K < d(d+1)/2
        \end{cases}\;.
    \end{align*}
    Thus their bound is
    \begin{align*}
        &\Pr\left({\zeta\neq 1}\right) 
        \\&\leq \left(\frac{4K}{d}+3\log(d)\right)\exp\left(\frac{-m}{32H_{2,\text{lin}}}\right)
        \\&\geq
         \begin{cases}
        \left(\frac{4K}{d}+3\log(d)\right) 
        \exp\left({\frac{d^2-B}{32H_{2,\text{lin}}\log(d)}}\right) \ \ & K\geq d(d+1)/2\\\\
        \left(\frac{4K}{d}+3\log(d)\right) 
        \exp\left({\frac{B-K}{32H_{2,\text{lin}}\log(d)}}\right)  \ \ & K < d(d+1)/2
        \end{cases}\;.
    \end{align*}
    
    In the case when $K\geq d(d+1)/2$ and $K$ is large, e.g. when $K=d^q$ for some $q\geq 2$, the top $d$ gaps would be approximately equal and we can roughly claim $H_{2,\text{lin}}\simeq \frac{d}{\Delta_{\min}^{2}}$. Substituting these in the above error bound of OD-LinBAI yields
    $$
        \Pr\left({\zeta\neq 1}\right)\leq\left(\frac{4K}{d}+3\log(d)\right) 
        \exp\left({\frac{(d^2-B)\Delta_{\min}^2}{32d\log(d)}}\right)\;,
    $$
    where we assumed a smaller upper bound for \algOD in its favor. Now if we divide our bound with theirs we get
    $$
    O\left(
       \frac{q\log(d)}{d^{q-1}+\log(d)}\exp\left({\frac{-d^2\Delta_{\min}^2}{d\log(d)}}\right)\right)\;,
    $$
    which is less than 1 and in this case \textbf{our error bound is tighter}. Nevertheless, in the case of $K<d(d+1)/2$, it seems their bound is tighter.
\end{proof}

\paragraph{Corner cases:}
There are special corner cases where digging into different cases of our bounds we can compare them based on \cref{eq:GSE/OD}. We can see that our error bound is worse than OD-LinBAI in a particular setting where $d$ is small, $q > 8$ is fixed, and $B \to \infty$. However, in the same setting, if $q < 8$, then the conclusion would be the opposite. We also list below several additional cases where our bound improves upon OD-LinBAI:
\begin{enumerate}
    \item $B$ is fixed, $d$ is fixed, and $q \to \infty$.

    \item $B$ is fixed, and $d, q \to \infty$.
    
    \item $B \to \infty$, while $d$ is small and $q < 8$ is fixed (the case we described above).
    
    \item $B \to \infty$ while $d \geq 3$ is fixed, and $q$ grows at the same rate as $B$. In this case, the term with $d^{q - 1}$ grows faster than the exponent, because $e^x \approx 2.7^x$.
\end{enumerate}

\section{More Experiments}\label{app:Exp}

First note that we set $\eta=2$ in the experiments and in the paper for ease of reading, but note that if we set $\eta=K$ such that the error bound in \cref{eq:G-opt-bnd} is
\begin{align*}
     \delta\leq 2K\expp{\frac{-B\dmin^{2}}{4\sigma^2d }}.
\end{align*}

which could be order-wise better than \cref{eq:G-opt-bnd} under a specific regime of $K$ and $d$. The problem is this would make the algorithm totally static and the adaptivity is lost and deteriorates the performance in instances like \cref{sec:hard}. As such, there is a trade-off and $\eta=2$ seems the best for adaptive experiments.

\subsection{Details of the Experimental Results}
\label{app:exps}

Our preliminary experiments showed that \algBG is sensitive to the choice of the kernel. Therefore, we tune \algBG in each experiment and choose the best kernel from a set of kernels. Note that this gives \algBG an advantage. The best performing kernels are linear, exponential, and Mat\'{e}rn kernels. \algBGLin stands for the linear, \algBGexp for the exponential, and \algBGM for the Mat\'{e}rn kernels.

We used a combination of computing resources. The main resource we used is the USC Center for Advanced Research Computing (https://carc.usc.edu/). Their typical compute node has dual 8 to 16 core processors and resides on a 56 gigabit FDR InfiniBand backbone, each having 16 GB memory. We also used a PC with 16 GB memory and Intel(R) Core(TM) i7-10750H CPU. For the \algPc runs we even tried Google Cloud c2-standard-60 instances with 60 CPUs and 240 GB memory.

\subsection{Real-World Data Experiments}

For this experiment, we use the "Automobile Dataset"\footnote{J. Schlimmer, Automobile Dataset, https://archive.ics.uci.edu/ml/datasets/Automobile, accessed: 01.05.2021, 1987} 
, which has features of different cars and their prices. We assume the car prices are not readily in hand, rather we get samples of them where the price of car $i$ is $N(p_i,0.1)$ where $p_i$ is the price of car $i$ in the dataset. The dataset includes $205$ cars and we use the most informative subset of features namely `curb weight`, `width`, `engine size`, `city mpg`, and `highway mpg` so $d=5$. All the features are rescaled to $[0, 1]$. We want to find the most expensive car by sampling. In each replication, we sample $K$ cars and run the algorithms with the given budget on them. The purpose is to evaluate the performance of the algorithms on a real-world dataset and test their robustness to model misspecification.

The other dataset is the "Electric Motor temperature"\footnote{J. Bocker, Electric Motor Temperature, https://www.kaggle.com/wkirgsn/electric-motor-temperature, 2019, accessed: 01.05.2021} %
and we want to find the highest engine temperature. Again, we take samples from the temperature distribution $N(\tau_i,0.1)$ where $\tau_i$ is the temperature of motor $i$ in the dataset. All the features are rescaled to [0,1] and $d$ is 11. The dataset includes $\sim998k$ data points.

\begin{figure}
\captionsetup[subfigure]{slc=off,margin={0cm,0cm}}
  \parbox[b]{.9\columnwidth}{\includegraphics[width=\hsize]{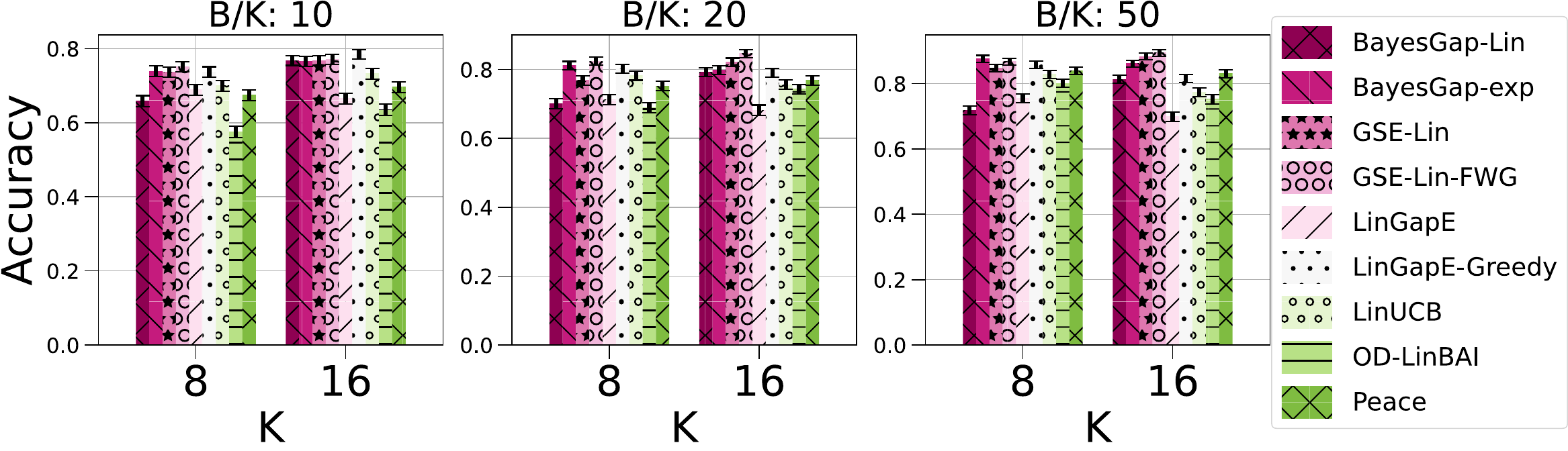}}%
  \parbox[b]{.9\columnwidth}{\subcaption{}\label{fig:auto}}\\
  \parbox[b]{.9\columnwidth}{\includegraphics[width=\hsize]{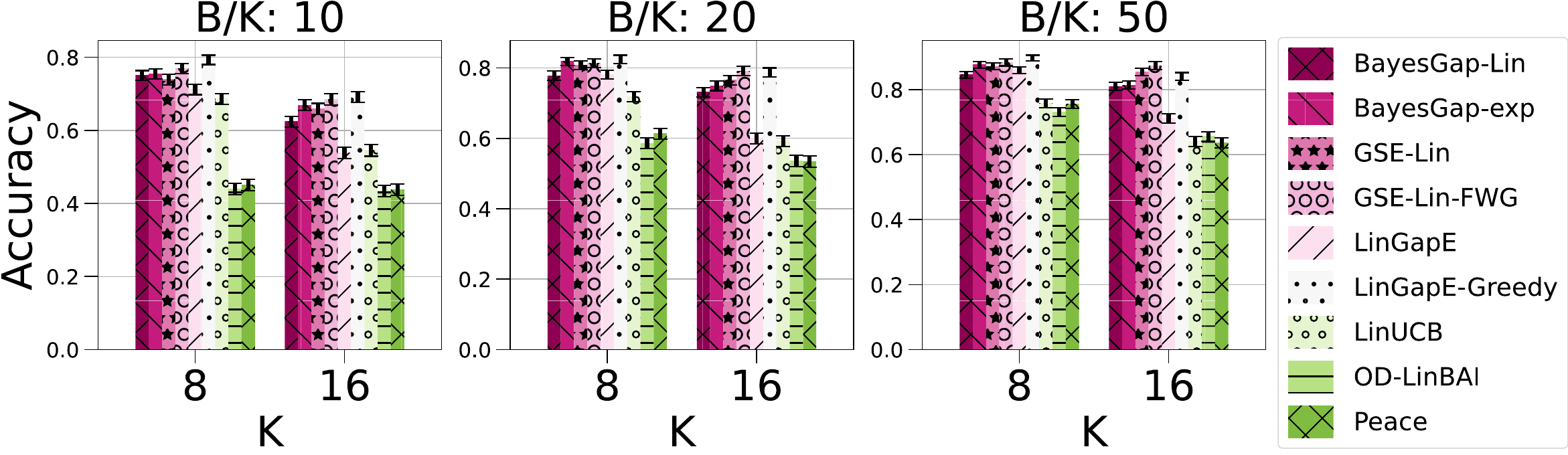}}%
  \parbox[b]{.9\columnwidth}{\subcaption{}\label{fig:pmsm}}
  \caption{Automobile (\ref{fig:auto}) and electric motor (\ref{fig:pmsm}) datasets.}
\end{figure}

\cref{fig:auto} and \cref{fig:pmsm} show the results indicating \algGSE variants outperform others or have competing performance with \algLinGapEG. The experimental results in \cref{fig:pmsm} show that our algorithm has the highest accuracy in most of the cases despite the fact that we use a linear model for a real-world dataset. This experiment also shows how well all the linear BAI algorithms could generalize to real-world situations.

\subsection{More OD-LinBAI Experiments}\label{app:OD-Exp}
In this section, we include further experiments to compare \algGSE with \algOD. First, we illustrate that our algorithm outperforms \algOD \citep{yang2021minimax}. \cref{fig:synt6-OD} shows the results for the same experiment as in \cref{sec:randexp} but with $\sigma^2=1$ (to imitate \citet{yang2021minimax}) for different $B$ and $K$. We can observe for small budgets increasing $K$ our algorithm outperforms \algOD more and more. While if $B$ is extremely large like $500$ we can see \algOD outperform \algGSE.

\begin{figure}[ht]
\centering
  \centering
  \includegraphics[height=1.1in, width=\linewidth]{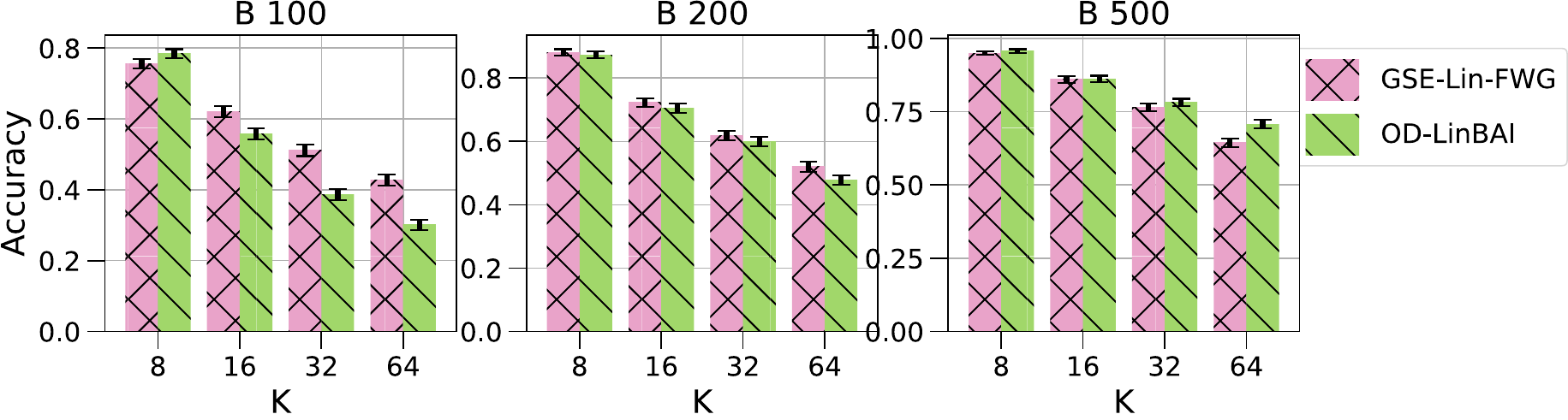}
  \caption{Randomized experiment, comparison with \algOD}
  \label{fig:synt6-OD}
\end{figure}

\cref{fig:synt7} shows the corner case experiment of Section 5.1 in \citet{yang2021minimax} where $\sigma^2=1$, $d=2$, $\theta_*=e_1=x_1$, $x_K=(\cos(3\pi/4),\sin(3\pi/4))\T$, and $x_i=(\cos(\pi/4 + \phi_i), \sin(\pi/4 + \phi_i))\T$ for $i=2,\cdots,K-1$ where $\phi_i\sim N(0,0.09^2)$ are i.i.d. samples. In this experiment \algOD outperforms our algorithm. Since \algOD only has 1 stage and simplifies to a G-optimal design, it seems that the specific setting of this experiment makes a G-optimal design most effective in only 1 stage.

\begin{figure}[ht]
\centering
  \centering
  \includegraphics[height=1.1in, width=\linewidth]{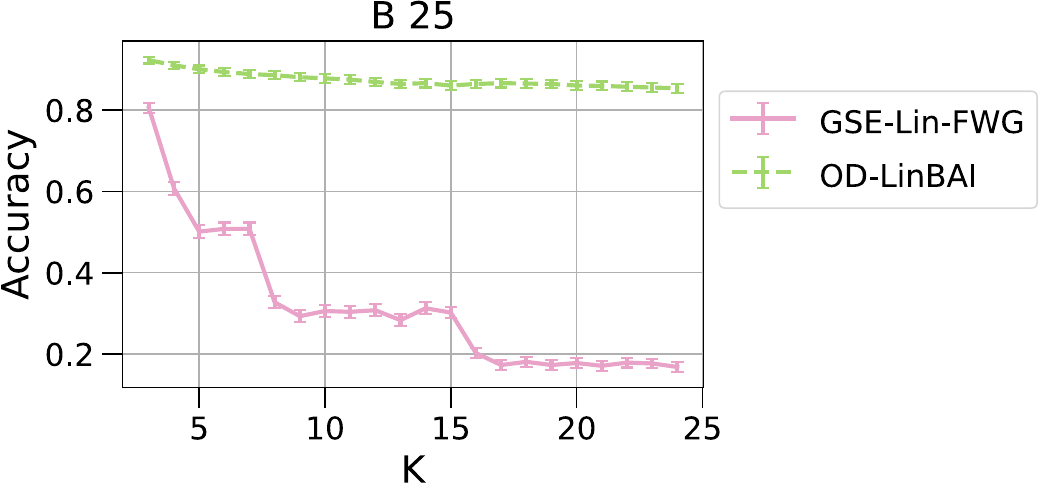}
  \caption{The corner case experiment}
  \label{fig:synt7}
\end{figure}

\section{Cumulative Regret and Fixed-Confidence Baselines}\label{app:variants}

By Proposition 33.2 in \citet{lattimore-Bandit} there is a connection between the policies designed for minimizing the cumulative regret and BAI policies. However, by the discussion after corollary 33.2 in \citet{lattimore-Bandit} the cumulative regret algorithms could under-perform for BAI depending on the bandit instance. This is because these algorithms mostly sample the optimal
arm and play suboptimal arms barely enough to ensure they are not optimal. In BAI, this leads to a highly suboptimal performance with asymptotically polynomial simple regret. However, we can compare our BAI algorithm with cumulative regret bandit algorithms as a sanity check or potential competitor. We modify the cumulative regret algorithms into a BAI using a heuristic recommendation policy, mainly by returning the most frequently played arm. 

We take \algLinUCB algorithm \citep{Li_2010-LinUCB} and let $x_t \in R^d$ be the pulled arm in stage $t$, and let the number of stages be equal $B$. \algLinUCB has the following upper bound on its expected $B$-stage regret. With probability at least $1 - \delta$,
\[x_1\T \theta_* - \left(\frac{1}{B} \sum_{t = 1}^B x_t\right)\T \!\!\!\! \theta_*
\leq \tilde{O}\left(d \sqrt{\frac{\log(1 / \delta)}{B}}\right)\;.\]
where $\tilde{O}$ hides additional log factors in $B$ \citep{Li_2010-LinUCB}. Note that this is a high-probability guarantee on the near-optimality of the average of played arms, the average feature vectors of all pulled arms. If we let this gap be smaller that $\dmin$ then we get an error bound of
\[\tilde{O}\left(\expp{\frac{-B\dmin^2}{d^2}}\right)\;.\]
In a finite arm case, like our setting, we take the most frequently played arm, call it $\varrho$, as the optimal arm. The reason is that a cumulative regret algorithm plays the potentially optimal arm the most. We conjecture this is the same as the average of played arms, since when we have large enough budget the mode (i.e. $\varrho$) converges to the mean by the Central Limit Theorem. If we set $d \sqrt{\log(1 / \delta)/B}= \dmin$ then $\delta= \expp{-B\dmin^2/d^2}$ which is of same order (except we get $d$ instead of $d^2$ which is better) as our bound in \cref{thm:SE-opt-G} also we used $\tilde{O}$ here.

For the GLM case, we employ \algUCBGLM algorithm \citep{UCBGLM-LiLZ17a} which improves the results of \textit{GLM-UCB} \citep{GLMUCB-Filippi-2010}. According to their most optimistic bound in Theorem 4 of \citet{UCBGLM-LiLZ17a}, we know the \algUCBGLM cumulative regret is less than $C\cmin\sigma^2/\kappa_l\sqrt{dB\log(B/\delta)}$ for $B$ samples, where $\kappa_l$ lower bounds the local behavior of ${\meanf}'(x)$ near $\theta_*$ and $C$ is a positive universal constant. Now since simple regret is upper bounded by the cumulative regret, this is also a simple regret bound if we return $\varrho$, i.e.
\begin{align*}
    \meanf(x_1\T\theta_*)-\meanf(x_{\varrho}\T\theta_*)&\leq \sum_{t=1}^B \meanf(x_1\T\theta_*)-\meanf(x_{t}\T\theta_*)\\&\leq C\cmin\sigma^2/\kappa_l\sqrt{dB\log(B/\delta)}\,.
\end{align*}
Now if we set this expression equal $\dmin$ we have 
\[
    \delta=B\expp{\frac{-\dmin^2\kappa_l^2}{\cmin^2dB}}\;,
\]
which is comparable to the bound in \cref{thm:SE-opt-G-GLM} but has a slower decrease in $B$.

We could also turn an FC BAI algorithm into a FB algorithm by stopping at the budget limit and using their recommendation rule. We do a grid search and use the best $\delta$ for them. \algLinGapE algorithm \citep{xu2017fully-LinGapE} is the state-of-art algorithm that performs the best among many \citep{degenne2020gamification}. By their Theorem 2, we require access to $\{\Delta_i\}_{i\in\A}$ to access the bound and find a proper $\delta$ for a given $B$. This is not very desirable from a practical point of view since we need to know $\{x_i\}_{i\in\A}$ and $\theta_*$ beforehand. As such, in our experiments, we find the best $\delta$ by a grid search based on the empirical performance of the algorithm for FB BAI. We chose the best one for them in their favor.

\section{More on Related Work}\label{app:fRelated}

\subsection{Adaptive vs.~Static BAI}\label{sec:Adp.vs.Stat} 
As argued in \citet{soare2014bestarm} and \citet{xu2017fully-LinGapE}, adaptive allocation is necessary for achieving optimal performance. However, it adds an extra $\sqrt{d}$ factor to the confidence bounds, and worsens the sample complexity. \citet{soare2014bestarm} and \citep{xu2017fully-LinGapE} propose their FC BAI algorithms $\X\Y$-adaptive and LinGapE as an attempt to address this issue. Unlike $\X\Y$-adaptive, LinGapE uses a transductive design and is shown to outperform $\X\Y$-adaptive. Our algorithm with G-optimal design applies an optimal allocation within each phase, which is different than the greedy and static allocation used by $\X\Y$-adaptive. We modify LinGapE to be applied to the FB BAI setting and compare it with our algorithm in \cref{sec:experiments}. In most cases, our algorithm performed better. Therefore, we believe our algorithm is capable of properly balancing the trade-off between adaptive and static allocations. We further discuss the related work, such as those in the FC BAI setting and optimal design in \cref{app:fRelated}.

\subsection{BAI with Successive Elimination}
Successive elimination 
\citep{AlmostOptimal-Karnin-2013} 
is common in BAI; 
\emph{Sequential Halving} from \citet{AlmostOptimal-Karnin-2013} is closely related to our work which is developed for the FB MAB problems. We extend these algorithms to the structured bandits and use an optimal static stage which adapts to the rewards between the stages.

\subsection{Optimal Design}
Finding the optimal distribution over arms is "optimal design" while finding the optimal number of samples per arm is called "optimal allocation". 
The exact optimization for many design problems is NP-Hard \citep{allenzhu2017nearoptimal}
Therefore, in the BAI literature, optimal allocation is usually treated as an implementation detail \citep{soare2014bestarm,degenne2020gamification}.
and several heuristics are proposed
\citep{Khachiyan-1996,Kumar-Yildirim-2005}. However, most of these methods are greedy \citep{degenne2020gamification}; in particular, they start with one observation per arm, and then continue with the arm that reduces the uncertainty the most in some sense \citep{soare2014bestarm}. Jedra and \citet{Jedra-2020} have a procedure (Lemma 5) that does not start with a sample for each arm but it works for FC setting. 
\citet{pmlr-v80-tao18a} suggest that solving the convex relaxation of the optimal design along with a randomized estimator yields a better solution than the greedy methods. This motivates our \algFW algorithm, which is based on FW fast-rate algorithms. 
\citet{berthet2017fast} developed a UCB Frank-Wolfe method for \emph{bandit optimization}. Their goal is to maximize an unknown smooth function with fast rates. Simple regret could be a special case of their framework, however, as of now, it is not clear how to use this method for a BAI bandit problem \citep{degenne2019nonasymptotic}. 
In our experiments, we tried several variants of optimal design techniques where the results confirm that \algFW mostly outperforms the others.

\subsection{Related Work on Fixed-Confidence Best Arm Identification}\label{app:FClit}

In this section, we discuss the works related to the FC setting in more detail. There are several BAI algorithms for linear bandits under the FC setting. We only discuss the main algorithms. \citet{soare2014bestarm} proposed the $\X\Y$ algorithms based on transductive experimental design \citep{xu2017fully-LinGapE}. In the static case, they fix all arm selections before observing any reward; as a result, it cannot estimate the near-optimal arms.   
The remedy is an algorithm that adapts the sampling based on the rewards history to assign most of the budget on distinguishing between the suboptimal arms.
\citet{Abbasi-2011} introduced a confidence bound based on Azuma’s inequality 
for adaptive strategies which is looser than the static bounds by a $\sqrt{d}$ factor. The $\X\Y$-adaptive is trying to avoid the extra $\sqrt{d}$ factor as a semi-adaptive algorithm. This algorithm improves the sample complexity in theory, but the algorithm must discard the history of previous phases to apply the bounds. This empirically degrades the performance as shown in \citet{xu2017fully-LinGapE}. 

\citet{xu2017fully-LinGapE} proposed the \algLinGapE algorithm which avoids the $\sqrt{d}$ factor by careful construction of the confidence bounds. However, the sample complexity of their algorithm is linear in $K$, which is not desirable. In this paper, we derive error bounds logarithmic in $K$ for both linear and generalized linear models.

\citet{Jedra-2020} introduced an FC BAI algorithm which tracks an optimal proportion of arm draws and updates these proportions as rarely as possible to avoid compromising its theoretical guarantees. Lemma 5 in this paper provides a procedure that can help avoid greedy sampling of all the arms.

\section{Heuristic Exploration for More General Models}\label{app:Heur}

Here we design an optimal allocation For a general model in \algGSE. Let's start with the following example; consider $K+1$ arms such that $x_i= e_1+e_1*\xi$ where $\xi\sim N(0,0.001)$ for $i\in\{1,\cdots,K\}$ and $x_{K+1}=e_2$. If we have $B=2K$, in this example, an optimal design would be to sample each $\{x_1,\dots,x_K\}$ one time and sample $x_{K+1}$, $K$ times. This is very different than a uniform exploration and motivates our idea of a generalized optimal design as follows. In each stage, cluster the remaining arms into $m$ clusters, e.g. using k-means on the $x_i$'s, then divide the budget equally between the clusters. Now in each cluster do a uniform exploration. In this way, the arms in larger clusters get a smaller budget and we get an equal amount of information in all the directions. 

For a general structured setting, we can embed the features to a lower dimension space and then apply the previous algorithms. The candidates include Principle Component Analysis, Method of Moments \citep{MOM-tripuraneni2021provable}, and encoding with Neural Networks \citep{riquelme2018deep}.

\end{document}